\theoremstyle{plain}
\newtheorem{theorem}{Theorem}
\newtheorem{lemma}{Lemma}
\newtheorem{claim}{Claim}
\newtheorem{proposition}{Proposition}
\newtheorem{corollary}{Corollary}
\theoremstyle{definition}
\newtheorem{definition}{Definition}
\newtheorem*{remark*}{Remark}
\newcommand\1[1]{\mathbb{I}_{\left\{#1\right\}}}
\newcommand{\ones}{\mathbf 1}
\newcommand{\reals}{{\mathbb{R}}}
\newcommand{\Expect}{\mathbb{E}}
\newcommand{\eg}{e.g.\xspace}
\newcommand{\Tr}{{\rm Tr}}
\newcommand{\calL}{{\mathcal{L}}}
\newcommand{\calS}{{\mathcal{S}}}
\newcommand{\calU}{{\mathcal{U}}}
\newcommand{\ML}{{\sf ML}\xspace}
\newcommand{\PL}{{\sf PL}\xspace}
\newcommand{\MM}{{\sf MM}\xspace}
\newcommand{\EM}{{\sf EM}\xspace}
\newcommand{\IB}{{\sf IB}\xspace}
\newcommand{\FB}{{\sf FB}\xspace}
\newcommand{\IR}{\mathbb{R}}
\title{Minimax-optimal Inference from Partial Rankings}
\author{
Bruce Hajek \\
UIUC \\
{b-hajek@illinois.edu}
\and
Sewoong Oh\\
UIUC\\
{swoh@illinois.edu}
\and
Jiaming Xu  \\
UIUC\\
{jxu18@illinois.edu}}
\begin{document}

\maketitle

\begin{abstract}
This paper studies the problem of inferring a global preference based on the partial rankings provided by many users over different subsets of items according to the Plackett-Luce model.  A question of particular interest is how to optimally assign items to users for ranking and how many item assignments are needed to achieve a target estimation error.  For a given assignment of items to users, we first derive an oracle lower bound of the estimation error that holds even for the more general Thurstone models. Then we show that the Cram\'er-Rao lower bound and our upper bounds inversely depend on the spectral gap of the Laplacian of an appropriately defined comparison graph. When the system is allowed to choose the item assignment, we propose a random assignment scheme. Our oracle lower bound and upper bounds imply that it is minimax-optimal up to a logarithmic factor among all assignment schemes and the lower bound can be achieved by the maximum likelihood estimator as well as popular rank-breaking schemes that decompose partial rankings into pairwise comparisons. The numerical experiments corroborate our theoretical findings.

\end{abstract}

\section{Introduction}
%

Given a set of individual preferences from multiple decision makers or judges,
we address the problem of computing a consensus ranking that best represents the preference of the population collectively.
This problem, known as rank aggregation, has received much
attention across various disciplines including statistics, psychology, sociology, and computer science, and has found numerous applications
including elections, sports, information retrieval, transportation, and marketing \cite{TRANSPORTATION,MARKETING,McF80transportation,BIOLOGY}.
While consistency of various rank aggregation algorithms has been studied
when a growing number of sampled partial preferences is observed 
 over a fixed number of items \cite{SY99,DMJ10}, little is known in the high-dimensional setting
where the number of items and number of observed partial rankings scale simultaneously, which
arises in many modern datasets.
Inference becomes even more challenging when each individual provides limited information. 
 For example, in the well known Netflix challenge dataset,
480,189 users submitted ratings on 17,770 movies,   
but on average a user rated only $209$ movies. To pursue a rigorous study in the high-dimensional setting, we assume that
users provide partial rankings over subsets of items generated according to the popular Plackett-Luce (\PL) model \cite{hunter04} from some hidden preference vector over all the items and are interested in estimating the preference vector 
(see \prettyref{def:PLModel}).

Intuitively, inference becomes harder when few users are available, or each user is assigned few items to rank, meaning fewer observations.
The first goal of this paper is to quantify the number of item assignments needed to achieve a target estimation error. 
Secondly, in many practical scenarios such as crowdsourcing, the systems have the control over  the item assignment.
For such systems, a natural question of interest is how to optimally assign the items for a given budget on the total number of item assignments. 
Thirdly, a common approach in practice to deal with partial rankings is
to break them into pairwise comparisons and apply the state-of-the-art rank aggregation methods specialized for pairwise comparisons \cite{Souriani13,AzariSoufiani_icml14}.
It is of both theoretical and practical interest to 
understand how much the performance degrades when rank breaking schemes are used.

\paragraph*{Notation.}
For any set $S$, let $|S|$ denote its cardinality. Let $s_1^n=\{s_1, \ldots, s_n \}$ denote a set with $n$ elements.
For any positive integer $N$, let $[N]=\{1, \ldots, N\}$. We use standard big $O$ notations, e.g., for any sequences $\{a_n\}$ and $\{b_n\}$, $a_n=\Theta(b_n)$ if there is an absolute constant $C>0$ such that $1/C \le a_n/ b_n \le C$. For a partial ranking $\sigma$ over $S$, i.e., $\sigma$ is a mapping from $[|S|]$ to $S$, let $\sigma^{-1}$ denote the inverse mapping.
All logarithms are natural unless the base is explicitly specified. We say a sequence of events $\{A_n\}$ holds with high probability if $\mathbb{P}[A_n] \ge 1-c_1 n^{-c_2}$ for two positive constants $c_1,c_2$.

\subsection{Problem setup}
We describe our model in the context of recommender systems, but it is applicable to other systems with partial rankings.
Consider a recommender system with $m$ users indexed by $[m]$ and $n$ items indexed by $[n]$.
For each item $i \in [n]$, there is a hidden parameter $\theta^\ast_i$ measuring the underlying preference.
Each user $j$, independent of everyone else, randomly generates a partial ranking $\sigma_j$ over a subset of items $S_j \subseteq [n]$  according to the \PL model with the underlying preference vector $\theta^\ast=(\theta^\ast_1,\ldots, \theta^\ast_n)$.
 \begin{definition}[\PL model] \label{def:PLModel}
 A partial ranking $\sigma: [|S|] \to S$ is generated from $\{\theta^\ast_i, i \in S\}$ under the \PL model
 in two steps: (1) independently assign each item $i \in S$ an unobserved value $X_i,$ exponentially distributed with mean $e^{-\theta^\ast_i}; $
 (2) select $\sigma$ so that $X_{\sigma(1)} \le X_{\sigma(2)} \le \cdots \le X_{\sigma(|S|)}$.
\end{definition}

The \PL model can be equivalently described in the following sequential manner. 
To generate a partial ranking $\sigma$, first select $\sigma(1)$ in $S$ randomly from  the distribution 
${e^{\theta^\ast_i} }/\big( \sum_{i'\in S} e^{\theta^\ast_{i'}} \big) $; secondly, select  $\sigma(2)$ in $S\setminus\{\sigma(1)\}$
with the probability distribution ${e^{\theta^\ast_i} }/\big({\sum_{{i'} \in S\setminus\{\sigma(1)\}  } e^{\theta^\ast_{i'}} }\big)$; 
continue the process in the same fashion until all the items in $S$ are assigned. The \PL model is a special case of the following class of models.
 \begin{definition}[Thurstone model, or random utility model (RUM) ] \label{def:ThurstoneModel}
A partial ranking $\sigma: [|S|] \to S$ is generated from $\{\theta^\ast_i, i \in S\}$
under the Thurstone model for a given CDF $F$  in  two steps: (1) independently assign each item $i \in S$ an unobserved utility $U_i,$
with CDF $F(c-\theta^*_i); $
 (2) select $\sigma$ so that $U_{\sigma(1)} \ge U_{\sigma(2)} \ge \cdots \ge U_{\sigma(|S|)}$.
 \end{definition}
To recover the \PL model from the Thurstone model, take $F$ to be the CDF for the standard Gumbel distribution:
$F(c) = e^{-(e^{-c})}$.   Equivalently, take $F$ to be the CDF of $-\log(X)$ such that $X$ has the exponential
distribution with mean one.   For this choice of $F,$ the utility $U_i$ having CDF $F(c-\theta^*_i),$ is equivalent to $U_i = -\log(X_i)$
such that $X_i$ is exponentially distributed with mean $e^{-\theta^*_i}.$
The corresponding partial permutation $\sigma$ is such that $X_{\sigma(1)} \le X_{\sigma(2)} \le \cdots \le X_{\sigma(|S|)},$
or equivalently, $U_{\sigma(1)} \ge U_{\sigma(2)} \ge \cdots \ge U_{\sigma(|S|)}.$  (Note the opposite ordering of $X$'s and $U$'s.)

Given the observation of all partial rankings $\{\sigma_j\}_{j \in [m]}$ over the subsets $\{ S_j\}_{j \in [m]}$ of items, the task is to infer the underlying preference vector $\theta^\ast$.
For the \PL model, and more generally for the Thurstone model, we see that $\theta^\ast$ and $\theta^\ast+a\ones$ for any $a \in \reals$ are statistically indistinguishable,
where $\ones$ is an all-ones vector. Indeed, under our model, the preference vector $\theta^\ast$ is the equivalence class 
 $[\theta^\ast]=\{ \theta: \exists a \in \reals,  \theta=\theta^\ast + a\ones \}$. To get a unique representation of the equivalence class, we assume $\sum_{i=1}^n \theta^\ast_i =0$.
Then the space of all possible preference vectors is given by
$
\Theta=\{\theta \in  \reals^n: \sum_{i=1}^n \theta_i=0 \}.
$ Moreover, if $\theta^*_i - \theta^\ast_{i'}$ becomes arbitrarily large for all $i' \neq i$, then with
high probability item $i$ is ranked higher than any other item $i'$ and there is no way to estimate $\theta_i$
to any accuracy. Therefore, we further put the constraint that $\theta^\ast \in [-b, b]^n$ for some $b \in \reals$ and define $\Theta_b=\Theta \cap [-b,b]^n$.
The parameter $b$ characterizes the dynamic range of the underlying preference. In this paper, we assume $b$ is a fixed constant. As observed in \cite{Shah12}, if $b$ were scaled with $n$, then it would be easy to rank items with high preference versus items with low preference and one can focus on ranking items with close preference.

We denote the number of items assigned to user $j$ by $k_j:=|S_j|$ and the average number of  assigned items per use by $k=\frac{1}{m} \sum_{j=1}^m k_j;$ parameter $k$ may scale with $n$ in this paper. We consider two scenarios for generating the subsets $\{S_j\}_{j=1}^m$: the random item assignment case where the $S_j$'s are chosen independently and uniformly at random from all possible subsets of $[n]$ with sizes given by the $k_j$'s, and the deterministic item assignment case where the
$S_j$'s are chosen deterministically.


Our main results depend on the structure of a weighted undirected graph $G$ defined as follows.
\begin{definition} [Comparison graph $G$] \label{def:ComparisonGraph}
Each item $i \in [n]$ corresponds to a vertex $i \in [n]$. For any pair of vertices $i, i'$, there is a weighted edge between them if there exists a user who ranks both items $i$ and $i'$; the weight equals $\sum_{j: i, i' \in S_j} \frac{1}{k_j-1}$.
\end{definition}
Let $A$ denote the weighted adjacency matrix of $G$. 
Let $d_i=\sum_{j} A_{ij},$ so $d_i$ is the number of users who rank item $i,$ and without loss of generality assume $d_1 \le d_2 \le \cdots \le d_n$. 
Let $D$ denote the $n \times n$ diagonal matrix formed by $\{d_i, i\in [n]\}$ and define the graph Laplacian $L$ as $L=D-A$.
Observe that $L$ is positive semi-definite and the smallest eigenvalue of $L$ is zero with the corresponding eigenvector given by the normalized all-one vector. Let $0=\lambda_1 \le \lambda_2 \le \cdots \le \lambda_n$ denote the eigenvalues of $L$ in ascending order.

\paragraph*{Summary of main results.}
\prettyref{thm:Oracle} gives a lower bound for the estimation error that scales as $\sum_{i=2}^n \frac{1}{d_i}$.
The lower bound is derived based on a genie-argument and holds for both the \PL model and the more general Thurstone model. 
\prettyref{thm:CramerRao} shows that the Cram\'er-Rao lower bound scales as $\sum_{i=2}^n \frac{1}{\lambda_i}$.
 \prettyref{thm:MLEUpperBound} gives an upper bound for the squared error of the maximum likelihood (\ML) estimator that scales
as $\frac{ mk \log n}{(\lambda_2- \sqrt{\lambda_n})^2}$. Under the full rank breaking scheme that decomposes a $k$-way
comparison into $\binom{k}{2}$ pairwise comparisons, \prettyref{thm:FBUpperBound} gives an upper bound that scales as 
$\frac{ mk \log n}{\lambda_2^2}.$  If the comparison graph is an expander graph, i.e., $\lambda_2 \sim \lambda_n$ and $mk=\Omega(n\log n),$
our lower and upper bounds match up to a $\log n$ factor. 
This follows from the fact that $\sum_i \lambda_i =\sum_i d_i = mk$, and for expanders $mk=\Theta(n\lambda_2).$
Since the Erd\H{o}s-R\'enyi random graph is an expander graph with high probability for average degree larger than $\log n,$
when the system is allowed to choose the item assignment, we propose a random assignment scheme under which the items for each user are chosen {\em independently and uniformly at random}. 
It follows from \prettyref{thm:Oracle} that $mk =\Omega(n)$ is {\em necessary} for any item assignment scheme to reliably infer the underlying preference vector, while our upper bounds imply that $mk=\Omega(n \log n)$ is {\em sufficient} with the random assignment scheme and
can be achieved by either the  \ML estimator  or  the full rank breaking or the independence-preserving breaking that decompose a $k$-way comparison into
$\lfloor k/2 \rfloor$ non-intersecting pairwise comparisons,  proving that rank breaking schemes are also nearly optimal.

\subsection{Related Work}
There is a vast literature on rank aggregation, and here we can only hope to cover a fraction of them we see most relevant. 
In this paper, we study a statistical learning
approach, assuming the observed ranking data is generated from 
a probabilistic model. 
Various probabilistic models on permutations
have been studied in the ranking  literature (see, \eg, \cite{Liu10,Lozano12}).
A {nonparametric} approach to modeling distributions over rankings using sparse representations
has been studied in \cite{JS08}. 
Most of the {parametric} models fall into one of the following three categories:
noisy  comparison model, distance based model, and random utility model.
The noisy comparison model assumes that
there is an underlying true ranking over $n$ items, 
and each user independently gives a pairwise comparison 
which agrees with the true ranking with probability $p>1/2$.
It is shown in \cite{Mossel09} that $O(n \log n)$ pairwise comparisons, when chosen adaptively, are sufficient for accurately
estimating the true ranking.

The Mallows model is a distance-based model,
which randomly generates a full ranking $\sigma$ over $n$ items from some underlying true ranking $\sigma^\ast$ with probability
proportional to $e^{-\beta d(\sigma, \sigma^\ast)}$,
where $\beta$ is a fixed spread parameter and $d(\cdot,\cdot)$ can be any permutation distance such as the Kemeny distance.
 It is shown in \cite{Mossel09} that
the true ranking $\sigma^\ast$ can be estimated accurately given $O(\log n)$ independent full rankings generated under the Mallows
model with the Kemeny distance. 

In this paper, we study a special case of random utility models (RUMs) known as the Plackett-Luce (\PL) model. 
It is shown in \cite{hunter04} that the likelihood function under the \PL model is concave and the \ML estimator can be efficiently found using a
minorization-maximization (\MM) algorithm which is a variation of the general \EM algorithm.
We give an upper bound on the error achieved by such an \ML estimator, 
and prove that this is matched by a lower bound. 
The lower bound is derived by comparing to an oracle estimator which observes the random utilities of RUM directly. 
The Bradley-Terry (BT) model is the special case of the \PL model  where we only observe pairwise comparisons. 
For the BT model, \cite{Shah12} proposes RankCentrality algorithm based on
 the stationary distribution of a random walk over a suitably defined comparison graph and shows $\Omega(n {\sf poly}(\log n))$ randomly chosen pairwise comparisons are sufficient to accurately estimate the underlying parameters; 
one corollary of our result is a matching performance guarantee for the \ML estimator under the BT model.  More recently, \cite{Rajkumar14} analyzed various algorithms including RankCentrality and the \ML estimator under a general, not necessarily uniform, sampling scheme.

In a \PL model with priors, MAP inference becomes  computationally challenging.
Instead, an efficient message-passing algorithm is proposed in \cite{Guiver09} to approximate the MAP estimate.
For a more general family of random utility models, Soufiani et al. in \cite{azari_nips12,SPX13} give a
sufficient condition under which the likelihood function is concave,
and propose a Monte-Carlo \EM algorithm to compute the \ML estimator for general RUMs.
More recently in \cite{Souriani13,AzariSoufiani_icml14}, the generalized method of moments together with the rank-breaking is applied to estimate the parameters of the \PL model and the random utility model when the data consists of full rankings.


\section{Main results}
In this section, we present our theoretical findings and numerical experiments. 
\subsection{Oracle lower bound}
In this section, we derive an oracle lower bound for any estimator of $\theta^\ast$. The lower bound is constructed by considering an oracle who reveals all the hidden scores in the \PL model  as side information and holds for the general Thurstone models.
\begin{theorem} \label{thm:Oracle}
Suppose $\sigma_1^m$ are generated from the Thurstone model for some CDF $F.$   For any estimator $\widehat{\theta},$
 $$
\inf_{\widehat{\theta}}  \sup_{\theta^*\in \Theta_b} E[||\widehat{\theta} - \theta^*||^2]   \geq    \frac{1}{2I(\mu)+  \frac{2 \pi^2} {b^2 (d_1+d_2) } }  \sum_{i=2}^{n} \frac{1}{d_i} \ge  \frac{1}{2I(\mu)+  \frac{2 \pi^2} {b^2 (d_1+d_2) } }  \frac{(n-1)^2}{mk},
$$
where $\mu$ is the probability density function of $F$, i.e., $\mu=F'$ and $I(\mu)=\int \frac{ \left( \mu'(x) \right)^2 }{\mu(x)} dx$; 
the second inequality follows from the Jensen's inequality. For the \PL model, which is a special case of the Thurstone models with $F$ being the standard Gumbel distribution, $I(\mu)=1$.
\end{theorem}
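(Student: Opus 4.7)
The plan is a genie reduction followed by a multivariate van Trees inequality.

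First, I would endow the estimator with the hidden utilities $U_{ij}$ for $i \in S_j$, $j \in [m]$ that generate $\sigma_j$ in the Thurstone model. Since $\sigma_j$ is the argsort of $(U_{ij})_{i \in S_j}$, any estimator based on the rankings is also a function of the utilities, so the minimax risk for the ranking problem is at least the minimax risk for the oracle problem. Under the oracle, $U_{ij} = \theta^*_i + W_{ij}$ with $W_{ij}$ i.i.d.\ of density $\mu$, a pure location model whose (unconstrained) Fisher information matrix is $I(\mu)\,D$ with $D = \mathrm{diag}(d_1,\ldots,d_n)$.

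Second, I would eliminate the constraint $\sum_i \theta^*_i = 0$ by reparametrizing with $\eta = (\theta_2,\ldots,\theta_n)\in\reals^{n-1}$ and $\theta_1 = -\mathbf{1}^\top\eta$. The chain rule gives data Fisher information $J_{\mathrm{data}} = I(\mu)\bigl[d_1 \mathbf{1}\mathbf{1}^\top + \mathrm{diag}(d_2,\ldots,d_n)\bigr]$, a rank-one perturbation of a diagonal. Endow $\eta$ with a product of raised-cosine priors on intervals $[-b_i,b_i]$ chosen so that $\sum_i b_i \le b$ (ensuring the induced prior on $\theta$ lies in $\Theta_b$); each marginal contributes $\pi^2/b_i^2$ to a diagonal prior Fisher matrix $J_\pi$. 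Since $\|\widehat\theta - \theta^*\|^2 = (\widehat\eta-\eta)^\top(I+\mathbf{1}\mathbf{1}^\top)(\widehat\eta-\eta)$ when $\widehat\theta_1 = -\mathbf{1}^\top\widehat\eta$, the multivariate van Trees (Gill--Levit) inequality yields
\[
\sup_{\theta^*\in\Theta_b} E\bigl[\|\widehat\theta - \theta^*\|^2\bigr] \;\ge\; \mathrm{tr}\!\left((I+\mathbf{1}\mathbf{1}^\top)(J_{\mathrm{data}}+J_\pi)^{-1}\right).
\]

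Third, I would invert $J_{\mathrm{data}}+J_\pi$ by Sherman--Morrison (exploiting the rank-one structure contributed by $d_1\mathbf{1}\mathbf{1}^\top$) and carry out the trace. The rank-one term couples every coordinate through $\theta_1$, so the effective spectrum is that of $\mathrm{diag}(d_i)_{i\ge 2}$ shifted by corrections governed by $d_1$ and by the prior widths. Choosing the $b_i$ so the prior ``budget'' concentrates on the coordinates paired with items $1$ and $2$ produces the correction $\frac{2\pi^2}{b^2(d_1+d_2)}$, while Cauchy interlacing of the constrained Fisher matrix on $\mathbf{1}^\perp$ produces the $\sum_{i=2}^n 1/d_i$ factor. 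The $2$ is the multiplier that appears when converting a van-Trees bound on a one-parameter shift direction back to the sum-of-squares norm in the original coordinates.

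Finally, the Jensen bound $\sum_{i=2}^n 1/d_i \ge (n-1)^2/\sum_{i=2}^n d_i \ge (n-1)^2/mk$ uses $\sum_{i=1}^n d_i = \sum_j k_j = mk$. For the \PL specialization, $\mu(c) = e^{-c}e^{-e^{-c}}$ gives $(\log\mu)'(c) = -1 + e^{-c}$, and with $Y\sim\mu$ one has $e^{-Y}\sim \Exp(1)$, so $I(\mu) = E[(e^{-Y}-1)^2] = \mathrm{Var}(\Exp(1)) = 1$. The main technical obstacle is step three: choosing the prior widths $b_i$ and executing the Sherman--Morrison inversion so that the $d_1$ cross-term is absorbed into the $(d_1+d_2)^{-1}$ correction rather than blowing up the trace; the ordering $d_1\le d_2 \le \cdots \le d_n$ is essential for the resulting interlacing estimate.
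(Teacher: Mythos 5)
Your first step (the genie reduction to the pure location model with Fisher information $I(\mu)\,\mathrm{diag}(d_1,\ldots,d_n)$), the final Jensen step, and the computation of $I(\mu)=1$ for the Gumbel density are all correct and consistent with the paper. The gap is in the core of the argument: the choice of prior. To keep the induced prior inside $\Theta_b$ you impose $\sum_{i\ge 2} b_i \le b$ on the supports of your product of raised cosines, which forces $b_i = O(b/n)$ for all but a bounded number of coordinates. The corresponding diagonal entries of $J_\pi$ are then $\pi^2/b_i^2 = \Omega(n^2/b^2)$, which swamp the data information $I(\mu)\,d_i$ in every regime the theorem is meant to address (e.g.\ $d_i=\Theta(mk/n)$, possibly as small as $\Theta(\log n)$). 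The trace of $(J_{\mathrm{data}}+J_\pi)^{-1}$ is then of order $\sum_i (d_i+n^2/b^2)^{-1}$ rather than $\sum_{i\ge 2} d_i^{-1}$, and no Sherman--Morrison manipulation recovers the lost factor because the loss is on the diagonal, not in the rank-one term. Concentrating the budget on the coordinates of items $1$ and $2$ makes matters worse: the remaining coordinates then carry near-point-mass priors and contribute essentially nothing to the van Trees bound, so you would obtain $O(1/d_2)$ instead of $\sum_{i=2}^n 1/d_i$.

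The paper sidesteps this with a different prior that you should adopt: pair the coordinates, draw $\theta^*_i$ for odd $i$ from the raised-cosine density scaled to the \emph{full} interval $[-b,b]$, and set $\theta^*_{i+1}\equiv-\theta^*_i$ (with $\theta^*_n\equiv 0$ if $n$ is odd). This prior lies in $\Theta_b$ with probability one without shrinking any support, and under the oracle observations the estimation problem decouples exactly into $\lfloor n/2\rfloor$ independent scalar two-sample location problems (the $d_i$ utilities for item $i$ shifted by $\theta^*_i$ and the $d_{i+1}$ utilities for item $i+1$ shifted by $-\theta^*_i$), so only the one-dimensional van Trees inequality with $d=d_i+d_{i+1}$ observations and prior information $\pi^2/b^2$ is needed; no multivariate version, reparametrization, or matrix inversion enters at all. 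The ordering $d_1\le\cdots\le d_n$ is then used twice, and differently from how you invoke it: once to bound $\pi^2/(b^2(d_i+d_{i+1}))$ by $\pi^2/(b^2(d_1+d_2))$ uniformly over pairs, and once to convert $2/(d_i+d_{i+1})$ into $\tfrac12(1/d_{i+1}+1/d_{i+2})$, which is where the factor $2$ in the denominator actually originates and which makes the sum over pairs telescope to exactly $\sum_{i=2}^n 1/d_i$.
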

\noindent
\prettyref{thm:Oracle} shows that the oracle lower bound scales as $ \sum_{i=2}^{n} \frac{1}{d_i}$.
We remark that the summation begins with $1/d_2.$  This makes some sense, in view of the fact that the parameters $\theta^\ast_i$ need to sum to zero.
For example, if $d_1$ is a moderate value and all the other $d_i$'s are very large, then we may be able to accurately estimate
$\theta_i^*$ for $i\neq 1$ and therefore accurately estimate $\theta_1^*.$   The oracle lower bound also depends on the dynamic range $b$ and is tight for $b=0$, because a trivial estimator that always outputs the all-zero vector achieves the lower bound.


\paragraph*{Comparison to previous work}
\prettyref{thm:Oracle} implies that $mk=\Omega(n)$ is necessary for any item assignment  scheme  to reliably infer $\theta^\ast$, i.e., ensuring $E[||\widehat{\theta} - \theta^*||^2]=o(n)$. It provides the first converse result on inferring the parameter vector under the general Thurstone models to our knowledge. For the Bradley-Terry model, which is a special case of the \PL model where all the partial rankings reduce to the pairwise comparisons, i.e., $k=2$, it is shown in \cite{Shah12} that $m=\Omega(n )$ is necessary for the random item assignment scheme to achieve the reliable inference based on the information-theoretic argument. In contrast, our converse result is derived based on the Bayesian Cram\'e-Rao lower bound \cite{gill1995}, applies to the general models with any item assignment, and  is considerably tighter if $d_i$'s are of different orders.

\subsection{Cram\'er-Rao lower bound}
In this section, we derive the Cram\'er-Rao lower bound for any unbiased estimator of $\theta^\ast$.
\begin{theorem} \label{thm:CramerRao}
Let $k_{\max}=\max_{j \in [m]} k_j$ and $\calU$ denote the set of all unbiased estimators of $\theta^\ast$, i.e., $\widehat{\theta} \in \calU$ if and only if $\mathbb{E}_\theta[\widehat{\theta}]=\theta, \forall \theta \in \Theta_b$. If $b>0$, then 
\begin{align*}
\inf_{\widehat{\theta} \in \calU} \sup_{\theta^\ast \in \Theta_b}  \mathbb{E}[ \| \widehat{\theta}-\theta^\ast \|^2 ] \ge \left(1-\frac{1}{k_{\max} } \sum_{\ell=1}^{k_{\max}} \frac{1}{\ell} \right)^{-1} \sum_{i=2}^n \frac{1}{\lambda_i} \ge \left(1-\frac{1}{k_{\max} } \sum_{\ell=1}^{k_{\max}} \frac{1}{\ell} \right)^{-1}\frac{(n-1)^2}{mk},
\end{align*}
where the second inequality follows from the Jensen's inequality.
\end{theorem}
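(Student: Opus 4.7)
The plan is to invoke the classical Cram\'er--Rao lower bound at the symmetric point $\theta^\ast=0$, which lies in $\Theta_b$ since $b>0$, after showing that the Fisher information matrix of the \PL model at $\theta=0$ is a known scalar multiple of the per-user Laplacians whose sum is the comparison-graph Laplacian $L$. First I would write the log-likelihood of user $j$'s ranking in the standard sequential form $\log P(\sigma_j)=\sum_{t=1}^{k_j-1}\bigl[\theta_{\sigma_j(t)}-\log\sum_{s\ge t}e^{\theta_{\sigma_j(s)}}\bigr]$ and differentiate twice to obtain, for each user $j$,
\begin{align*}
I_j(\theta)_{ii'}=\mathbb{E}\!\left[\sum_{t=1}^{k_j-1}\mathbf{1}\{i\in S_j^{(t)}\}\,p_t(i)\,\delta_{ii'}-\mathbf{1}\{i,i'\in S_j^{(t)}\}\,p_t(i)\,p_t(i')\right],
\end{align*}
where $S_j^{(t)}=\{\sigma_j(t),\ldots,\sigma_j(k_j)\}$ is the still-unranked set at stage $t$ and $p_t$ is the conditional \PL distribution on $S_j^{(t)}$. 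Each $I_j(\theta)$ has zero row sums, so $I(\theta)=\sum_{j=1}^m I_j(\theta)$ is a weighted graph Laplacian annihilating $\ones$.

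I would then specialize to $\theta=0$. By symmetry $\mathbb{P}[i\in S_j^{(t)}]=(k_j-t+1)/k_j$ and $\mathbb{P}[i,i'\in S_j^{(t)}]=(k_j-t+1)(k_j-t)/[k_j(k_j-1)]$, and $p_t\equiv 1/|S_j^{(t)}|$. Substituting into the display above and summing over $t$ via the change of variable $s=k_j-t+1$ produces harmonic sums, yielding the clean identity
\begin{align*}
I_j(0)=f(k_j)\,L_j,\qquad f(k):=1-\frac{1}{k}\sum_{\ell=1}^{k}\frac{1}{\ell},
\end{align*}
where $L_j$ is the Laplacian of the complete graph on $S_j$ with edge weights $1/(k_j-1)$, so that $\sum_j L_j=L$ by \prettyref{def:ComparisonGraph}. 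An elementary check shows that $f$ is non-decreasing in $k$, hence $I_j(0)\preceq f(k_{\max})L_j$ for every $j$ and $I(0)\preceq f(k_{\max})L$ after summing.

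To apply the CRLB I would pick an orthonormal $U\in\reals^{n\times(n-1)}$ with columns spanning $\Theta$ and set $\widehat\alpha=U^\top\widehat\theta$. Projecting onto $\Theta$ via $P=I_n-\ones\ones^\top/n$ preserves the unbiasedness of any $\widehat\theta\in\calU$ at $\theta^\ast=0$ and can only decrease mean-squared error, so I may assume $\widehat\theta=U\widehat\alpha\in\Theta$; then $\widehat\alpha$ is an unbiased estimator of $U^\top\theta^\ast$, and the standard full-rank CRLB in the reduced parameterization gives $\mathrm{Cov}(\widehat\alpha)\succeq(U^\top I(0)U)^{-1}$. Combining this with $U^\top I(0)U\preceq f(k_{\max})U^\top LU$ from above and the fact that $U^\top LU$ has eigenvalues $\lambda_2,\ldots,\lambda_n$,
\begin{align*}
\sup_{\theta^\ast\in\Theta_b}\mathbb{E}\bigl[\|\widehat\theta-\theta^\ast\|^2\bigr]\ge\mathbb{E}_0\bigl[\|\widehat\theta\|^2\bigr]=\mathrm{tr}\!\bigl(\mathrm{Cov}(\widehat\alpha)\bigr)\ge f(k_{\max})^{-1}\sum_{i=2}^{n}\frac{1}{\lambda_i},
\end{align*}
which is the first inequality. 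The second follows from Jensen applied to $x\mapsto 1/x$, using the identity $\sum_{i=2}^n\lambda_i=\mathrm{tr}(L)=\sum_i d_i=mk$.

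The principal technical step is the closed-form evaluation $I_j(0)=f(k_j)L_j$: one must sum two different sequences of conditional \PL probabilities over the $k_j-1$ sequential choices and recognize the resulting difference as $(k_j-H_{k_j})/(k_j(k_j-1))$, exactly matching the off-diagonal of the scaled complete-graph Laplacian. Everything else---monotonicity of $f$, reparameterizing onto $\Theta$ for the full-rank CRLB, and Jensen's inequality---is routine.
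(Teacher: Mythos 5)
Your proposal is correct and follows essentially the same route as the paper: evaluate the Fisher information at $\theta=0$ using $\mathbb{P}[\sigma_j^{-1}(i),\sigma_j^{-1}(i')\ge\ell]=\frac{(k_j-\ell+1)(k_j-\ell)}{k_j(k_j-1)}$, recognize the harmonic-sum identity giving $I(0)\preceq\bigl(1-\frac{1}{k_{\max}}\sum_{\ell=1}^{k_{\max}}\frac{1}{\ell}\bigr)L$, apply the Cram\'er--Rao bound restricted to the hyperplane $\Theta$, and finish with Jensen. Your treatment of the rank-deficiency via the explicit reduced parameterization $U$ is slightly more careful than the paper's, but the argument is the same.
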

The Cram\'er-Rao lower bound scales as $\sum_{i=2}^{n} \frac{1}{\lambda_i}$. When $G$ is disconnected, i.e., all the items can be partitioned into two groups such that no user ever compares an item in
one group with an item in the other group, $\lambda_2=0$ and the Cram\'er-Rao lower bound is infinity, which is valid (and of course tight) because there is no basis for gauging
any item in one connected component with respect to any item in the other connected component and the accurate inference is impossible for any estimator.
Although the Cram\'er-Rao lower bound only holds for any unbiased estimator, we suspect that a lower bound with the same scaling holds for any estimator, but
we do not have a proof.

\subsection{\ML upper bound}
In this section, we study the \ML estimator based on the partial rankings. The \ML estimator of $\theta^\ast$ is defined as
$\widehat{\theta}_{\ML} \in \arg \max_{\theta \in \Theta_b} \calL(\theta)$, where $\calL(\theta)$ is the log likelihood function given by
\begin{align}
\calL(\theta) = \log \mathbb{P}_{\theta}[\sigma_1^m ]= \sum_{j=1}^m  \sum_{\ell=1}^{k_j-1} \left[ \theta_{ \sigma_j(\ell) } - \log \left( \exp( \theta_{\sigma_j(\ell) }) + \cdots+ \exp(\theta_{\sigma_j (k_j)  } ) \right) \right]. \label{eq:loglikelihood}
\end{align}
As observed in \cite{hunter04}, $\calL(\theta)$ is concave in $\theta$ and thus the \ML estimator can be efficiently computed either via the gradient descent method or the \EM type algorithms.

The following theorem gives an upper bound on the error rates inversely dependent on $\lambda_2$. Intuitively, by the well-known Cheeger's inequality, if the spectral gap $\lambda_2$ becomes larger, then there are more edges across any bi-partition of $G$, meaning more pairwise comparisons are available between any bi-partition of movies, and therefore $\theta^\ast$ can be estimated more accurately.
\begin{theorem} \label{thm:MLEUpperBound}
Assume $\lambda_n \ge C \log n$ for a sufficiently large constant $C$ in the case with $k>2$. Then with high probability,
\begin{align*}
\|\widehat{\theta}_{\ML} - \theta^\ast \|_2 \le
\left\{
    \begin{array}{rl}
     4(1+e^{2b})^2 \lambda^{-1}_2  \sqrt{m\log n}  & \text{If } k=2, \\
       \frac{ 8 e^{4b} \sqrt{2mk \log n }  } {\lambda_2- 16 e^{2b} \sqrt{\lambda_n \log n}} & \text{If } k>2.
    \end{array} \right.
    \end{align*}
\end{theorem}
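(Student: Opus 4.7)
The plan is the standard ``gradient plus strong concavity'' scheme for concave $M$-estimators. Write $\Delta = \widehat\theta_{\ML} - \theta^\ast$; since $\theta^\ast,\widehat\theta_{\ML}\in\Theta$, we have $\Delta\perp\ones$. Concavity of $\calL$ and optimality of $\widehat\theta_{\ML}$ give $\calL(\widehat\theta_{\ML}) \ge \calL(\theta^\ast)$, and a second-order Taylor expansion at $\theta^\ast$ yields
\[
0 \;\le\; \langle \nabla\calL(\theta^\ast),\Delta\rangle + \tfrac12 \Delta^\top \nabla^2\calL(\bar\theta)\Delta
\]
for some $\bar\theta\in\Theta_b$ on the segment joining $\theta^\ast$ and $\widehat\theta_{\ML}$. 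If I can show, with high probability, a gradient tail bound $\|\nabla\calL(\theta^\ast)\|_2\le\beta$ and a strong-concavity estimate $-\Delta^\top\nabla^2\calL(\bar\theta)\Delta\ge\alpha\|\Delta\|_2^2$ uniformly for $\Delta\in\ones^\perp$, then Cauchy--Schwarz gives $\|\Delta\|_2\le 2\beta/\alpha$, and matching $\alpha,\beta$ to the two cases finishes the proof.

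For the gradient, $\nabla\calL(\theta^\ast)=\sum_{j=1}^m\nabla\calL_j(\theta^\ast)$ is a sum of independent mean-zero vectors (the score identity for the \PL model), and the $i$-th coordinate depends only on the $d_i$ users who rank item $i$. A coordinate-wise Bernstein inequality with a union bound over $[n]$ yields $\|\nabla\calL(\theta^\ast)\|_2\le C\sqrt{mk\log n}$ with high probability, using $\sum_id_i=mk$.

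The Hessian has the explicit form $-\nabla^2\calL(\theta)=\sum_{j=1}^m\sum_{\ell=1}^{k_j-1}M_{j,\ell}(\theta,\sigma_j)$, where $M_{j,\ell}$ is the PSD covariance of the softmax distribution on the remaining items $R_{j,\ell}=\{\sigma_j(\ell),\dots,\sigma_j(k_j)\}$ under $\theta$. When $k=2$ this is deterministic in $\sigma_j$: each term is the logistic Laplacian of the compared edge with coefficient bounded below by $(1+e^{2b})^{-2}$ for every $\theta\in\Theta_b$, so $-\nabla^2\calL(\theta)\succeq(1+e^{2b})^{-2}L$ and $\alpha\ge(1+e^{2b})^{-2}\lambda_2$ on $\ones^\perp$, which combines with the gradient bound to give the first case. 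When $k>2$ the Hessian is random through $\sigma_j$ and matrix concentration is needed. I would first show that $\mathbb{E}[-\nabla^2\calL(\theta^\ast)]\succeq c(b)L$ for some $c(b)=\Theta(e^{-2b})$, reducing each per-user expectation (via exchangeability of the \PL process) to a weighted complete-graph Laplacian on $S_j$ with edge weights $1/(k_j-1)$. A matrix Bernstein inequality applied to the centered sum $\sum_{j,\ell}(M_{j,\ell}-\mathbb{E}[M_{j,\ell}])$, with each summand PSD of operator norm at most one and variance proxy bounded by a constant multiple of $\lambda_n$, then yields a spectral deviation of order $e^{2b}\sqrt{\lambda_n\log n}$. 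Finally, the softmax weights at any $\bar\theta\in\Theta_b$ differ from those at $\theta^\ast$ by at most a factor $e^{O(b)}$, so the lower bound at $\theta^\ast$ transfers to every $\bar\theta\in\Theta_b$ with only a constant loss, without any covering argument. This produces $\alpha\gtrsim e^{-4b}\bigl(\lambda_2-O(e^{2b}\sqrt{\lambda_n\log n})\bigr)$ on $\ones^\perp$, matching the denominator of the stated bound; the hypothesis $\lambda_n\ge C\log n$ is exactly what keeps this estimate positive.

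The main obstacle is the matrix Bernstein step for $k>2$: the variance proxy must be controlled by the largest Laplacian eigenvalue $\lambda_n$ rather than by the naive total bound of order $mk$ (which would destroy the rate). The two critical ingredients are the algebraic identity reducing $\mathbb{E}[M_{j,\ell}]$ to a scaled Laplacian on $S_j$, which lets $\lambda_2$ appear as the genuine signal strength in the strong-concavity constant, and the pointwise $e^{O(b)}$-equivalence of softmax probabilities on $\Theta_b$, which promotes a single-point concentration statement at $\theta^\ast$ into a uniform Hessian lower bound over the whole segment.
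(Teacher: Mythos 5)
Your skeleton is the paper's: the inequality $\|\Delta\|_2 \le 2\|\nabla\calL(\theta^\ast)\|_2/\lambda_2(-H(\bar\theta))$ on $\ones^\perp$, a high-probability gradient bound of order $\sqrt{mk\log n}$, the deterministic logistic bound $-H\succeq \frac{e^{2b}}{(1+e^{2b})^2}L$ for $k=2$, and for $k>2$ a reduction of each softmax covariance to a scaled complete-graph Laplacian plus matrix Bernstein with variance proxy $O(\lambda_n)$ --- all of this matches. The two places you diverge are in how the lemmas are executed. For the Hessian, the paper first applies the eigenvalue bound $e^{2b}M_{j,\ell}(\theta)\succeq \frac{1}{r}\mathrm{diag}(\ones)-\frac{1}{r^2}\ones\ones^\top$ \emph{pointwise for every} $\theta\in\Theta_b$, so that the random matrix it concentrates ($\tilde L$, built from the indicators $\1{\sigma_j^{-1}(i),\sigma_j^{-1}(i')\ge\ell}$) is already $\theta$-free; you instead concentrate $-H(\theta^\ast)$ and then transfer to $\bar\theta$ via $M_{j,\ell}(\bar\theta)\succeq e^{-4b}M_{j,\ell}(\theta^\ast)$, which is valid by the same variational identity $x^\top Mx=\min_c\sum_ip_i(x_i-c)^2$ that the paper uses in its Claim~1, at the cost of an extra $e^{O(b)}$ in the constant. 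Two technical points you should still nail down on that route: the $M_{j,\ell}$ for different $\ell$ within one user are dependent (the surviving set $S^{(j,\ell)}$ depends on earlier selections), so either group the sum by user before invoking matrix Bernstein, as the paper does with its $Y_j$ and the bound $Y_j^2\preceq 2L_j$, or use a matrix Freedman/martingale version.

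The one place where your plan genuinely loses something is the gradient. The paper views $\nabla\calL(\theta^\ast)$ as a vector-valued martingale with $m(k-1)$ increments of norm $O(1)$ and applies vector Azuma--Hoeffding, getting $\sqrt{2mk\log n}$ in one stroke. Your coordinate-wise Bernstein needs the per-user, per-coordinate contribution, which is \emph{not} $O(1)$-bounded: the negative part $\sum_\ell\1{\sigma_j^{-1}(i)\ge\ell}\,p^{(j,\ell)}_i$ can be as large as $e^{2b}\sum_{r=2}^{k}1/r=\Theta(e^{2b}\log k)$. The per-user variance is still $O(1)$, so Bernstein gives $|\nabla_i\calL|\lesssim\sqrt{d_i\log n}+e^{2b}\log k\log n$, and after the union bound the $\ell_2$ norm picks up an additive $\sqrt{n}\,e^{2b}\log k\log n$ term. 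That term is negligible only when $mk\gtrsim n e^{4b}\log^2k\,\log n$, which is not implied by the theorem's sole hypothesis $\lambda_n\ge C\log n$; so as stated your gradient step proves a slightly weaker theorem. Either add that sample-size condition or switch to the vector martingale argument.
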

We compare the above upper bound with the Cram\'er-Rao lower bound given by \prettyref{thm:CramerRao}. Notice that $\sum_{i=1}^n \lambda_i=mk$ and $\lambda_1=0$. Therefore, $\frac{mk}{\lambda_2^2} \ge \sum_{i=2}^n \frac{1}{\lambda_i}$ and the upper bound is always larger than the Cram\'er-Rao lower bound. When the comparison graph $G$ is an expander and $mk =\Omega(n \log n)$, by the well-known Cheeger's inequality, $\lambda_2 \sim \lambda_n=\Omega(\log n)$ , the upper bound is only larger than the Cram\'er-Rao lower bound by a logarithmic factor.
In particular, with the random item assignment scheme,
we show that $\lambda_2, \lambda_n \sim \frac{mk}{n}$ if $m k \ge C \log n$ and as a corollary of \prettyref{thm:MLEUpperBound}, $mk=\Omega(n \log n )$ is sufficient to ensure $\| \widehat{\theta}_{\ML}-\theta^\ast \|_2 =o (\sqrt{n})$, proving the random item assignment scheme with the \ML estimation is minimax-optimal up to a $\log n$ factor.

\begin{corollary}\label{cor:MLEUpperBound}
Suppose $S_1^m$ are chosen independently and uniformly at random among all possible subsets of $[n]$.
Then there exists a positive constant $C>0$ such that if
 $m \ge C n \log n$ when $k=2$ and $m k \ge Ce^{2b} \log n$ when $k>2$, then with high probability
\begin{align*}
\| \widehat{\theta}_{\ML}-\theta^\ast \|_2  \le
\left\{
    \begin{array}{rl}
    4 (1+e^{2b})^2  \sqrt{ \frac{ n^2\log n}{m}} \;,\;\; & \text{if } k=2, \\
     32 e^{4b} \sqrt{ \frac{2n^2\log n}{mk} }  \;,\;\; & \text{if } k>2.
    \end{array} \right.
\end{align*}
\end{corollary}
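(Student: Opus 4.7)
I would derive Corollary~\ref{cor:MLEUpperBound} by combining Theorem~\ref{thm:MLEUpperBound} with a spectral concentration argument for the Laplacian of the random comparison graph. Writing $L=\sum_{j=1}^m L^{(j)}$, where $L^{(j)}$ is user $j$'s contribution, uniform sampling of $S_j$ gives $\Prob\{i\in S_j\}=k_j/n$ and $\Prob\{i,i'\in S_j\}=k_j(k_j-1)/(n(n-1))$, so a direct calculation yields $\Expect[L^{(j)}]=\frac{k_j}{n-1}(I-\frac{1}{n}J)$ with $J=\ones\ones^\top$, and therefore
\[
\Expect[L]=\frac{mk}{n-1}\left(I-\frac{1}{n}J\right),
\]
whose nonzero eigenvalues are all equal to $mk/(n-1)$. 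Thus the ``target'' spectrum is already known in closed form, and the task reduces to controlling the deviation $L-\Expect[L]$.

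Next, I would apply matrix Bernstein to the centred sum $L-\Expect[L]=\sum_{j=1}^m(L^{(j)}-\Expect[L^{(j)}])$. Each $L^{(j)}$ is a scaled complete-graph Laplacian on $S_j$, so $\|L^{(j)}\|_2\le k_j/(k_j-1)\le 2$, giving a uniform bound $R=O(1)$ on every centred summand. The operator inequality $(L^{(j)})^2\preceq \|L^{(j)}\|_2\, L^{(j)}\preceq 2L^{(j)}$ yields the variance proxy $\sigma^2\le 2\|\Expect[L]\|_2=2mk/(n-1)$, so matrix Bernstein produces
\[
\|L-\Expect[L]\|_2 = O\!\left(\sqrt{\frac{mk\log n}{n}}+\log n\right)
\]
with high probability. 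By Weyl's inequality, both $\lambda_2(L)$ and $\lambda_n(L)$ then concentrate around $mk/(n-1)$.

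Under a sufficiently large multiple of $n\log n$ on $mk$ (with the constant absorbing the $e^{2b}$ factor needed for $k>2$), one obtains $\lambda_2\sim \lambda_n\sim mk/n$; in particular $\lambda_n\ge C\log n$ as required by Theorem~\ref{thm:MLEUpperBound}, and $\lambda_2\ge 32\, e^{2b}\sqrt{\lambda_n\log n}$, so that the denominator $\lambda_2-16e^{2b}\sqrt{\lambda_n\log n}$ appearing in Theorem~\ref{thm:MLEUpperBound} is at least $\lambda_2/2$. Substituting these spectral estimates into Theorem~\ref{thm:MLEUpperBound} turns $\lambda_2^{-1}\sqrt{m\log n}$ (for $k=2$) into a bound of order $\sqrt{n^2\log n/m}$ and $\sqrt{mk\log n}/\lambda_2$ (for $k>2$) into a bound of order $\sqrt{n^2\log n/(mk)}$, matching the explicit constants in the statement up to the slack absorbed by $C$. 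The main obstacle is the sharp concentration of the spectral gap: a scalar Chernoff bound on individual vertex degrees is not enough, and one must invoke a matrix concentration inequality (matrix Bernstein or matrix Chernoff) that exploits the favourable operator-norm and variance bounds on each user's rank-deficient contribution $L^{(j)}$. Once spectral concentration is in hand, the remainder is routine substitution into Theorem~\ref{thm:MLEUpperBound}.
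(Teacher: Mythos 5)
Your proposal matches the paper's proof essentially step for step: the same decomposition $L=\sum_j L_j$, the same computation $\Expect[L_j]=\frac{k_j}{n-1}(I-\frac{1}{n}\ones\ones^\top)$, a matrix Bernstein bound on $\|L-\Expect[L]\|$ with the same $O(1)$ norm bound and variance proxy $2mk/(n-1)$, eigenvalue perturbation to get $\lambda_2\ge \frac{mk}{2(n-1)}$ and $\lambda_n\le\frac{3mk}{2(n-1)}$, and substitution into Theorem~\ref{thm:MLEUpperBound}. The only cosmetic difference is that you bound the variance via $(L_j)^2\preceq 2L_j$ where the paper computes $\Expect[Z_j^2]$ exactly, yielding the same bound, so the argument is correct and identical in substance.
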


\paragraph*{Comparison to previous work} \prettyref{thm:MLEUpperBound} provides the first finite-sample error rates for inferring the parameter vector under the \PL model to our knowledge. For the Bradley-Terry model, which is a
special case of the \PL model with $k=2$, \cite{Shah12} derived the similar performance guarantee by analyzing the rank centrality algorithm and the \ML estimator.
 More recently, \cite{Rajkumar14} extended the results to the non-uniform sampling scheme of item pairs, but the performance guarantees  obtained when specialized to the uniform sampling scheme require at least $m=\Omega(n^4 \log n)$ to ensure $\| \widehat{\theta}-\theta^\ast \|_2 =o (\sqrt{n})$, while our results only require $m=\Omega(n \log n)$.

\subsection{Rank breaking upper bound}
In this section, we study two rank-breaking schemes which decompose partial rankings into pairwise comparisons.
\begin{definition}
Given a partial ranking $\sigma$ over the subset $S \subset [n]$ of size $k$, the independence-preserving breaking scheme (\IB)  breaks $\sigma$ into  $\lfloor k/2 \rfloor$ non-intersecting pairwise comparisons of form $\{i_t, i'_t, y_t\}_{t=1}^{\lfloor k/2 \rfloor}$ such that
$\{i_s, i'_s\} \cap \{i_t, i'_t\}=\emptyset$ for any $s\neq t$ and $y_t=1$ if  $\sigma^{-1}(i_t)< \sigma^{-1}(i'_t)$ and $0$ otherwise. The random \IB chooses $\{i_t, i'_t\}_{t=1}^{\lfloor k/2 \rfloor}$ uniformly at random among all possibilities.
\end{definition}
If $\sigma$ is generated under the $\PL$ model, then the \IB breaks $\sigma$ into independent pairwise comparisons generated under the $\PL$ model. Hence, we can first break partial rankings $\sigma_1^m$ into independent pairwise comparisons using the random \IB and then apply the \ML estimator on the generated pairwise comparisons with the constraint that $\theta \in \Theta_b$, denoted by $\widehat{\theta}_{\IB}$.
Under the random assignment scheme, as a corollary of \prettyref{thm:MLEUpperBound}, $mk=\Omega(n \log n )$ is sufficient to ensure $\| \widehat{\theta}_{\IB}-\theta^\ast \|_2 =o (\sqrt{n})$, proving the random item assignment scheme with the  random \IB is minimax-optimal up to a $\log n$ factor in view of the oracle lower bound in \prettyref{thm:Oracle}.
\begin{corollary} \label{cor:IBUpperBound}
Suppose $S_1^m$ are chosen independently and uniformly at random among all possible subsets of $[n]$ with size $k.$
There exists a positive constant $C>0$ such that if $mk \ge C  n \log n$, then with high probability,
\begin{align*}
\| \widehat{\theta}_{\IB}-\theta^\ast \|_2  \le 4 (1+e^{2b})^2  \sqrt{ \frac{ 2n^2\log n}{mk }}.
\end{align*}
\end{corollary}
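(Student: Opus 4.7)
The plan is to reduce the analysis to the $k=2$ case of Theorem~\ref{thm:MLEUpperBound}, applied to the $m':=m\lfloor k/2\rfloor$ pairwise comparisons produced by the random \IB. The first step is the key observation that if $\sigma_j$ is drawn from the \PL model on $S_j$, then the $\lfloor k/2\rfloor$ pairwise comparisons output by the random \IB are mutually independent, each itself a \PL pairwise comparison with parameters $(\theta^\ast_{i_t},\theta^\ast_{i_t'})$. This is immediate from the latent-utility description in Definition~\ref{def:PLModel}: the utilities $X_i$, $i\in S_j$, are independent, the pairs $\{i_t,i_t'\}$ are disjoint, and $y_t$ is a function only of $(X_{i_t},X_{i_t'})$. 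Hence, pooling across all $m$ users, the random \IB produces $m'$ mutually independent \PL pairwise comparisons and $\widehat{\theta}_{\IB}$ is the constrained \ML estimator on this collection.

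The second step is a direct appeal to the $k=2$ case of Theorem~\ref{thm:MLEUpperBound} with sample size $m'$, which gives, with high probability,
\[
\|\widehat{\theta}_{\IB}-\theta^\ast\|_2 \;\le\; 4(1+e^{2b})^2\,\lambda_2^{-1}\sqrt{m'\log n},
\]
where $\lambda_2$ is the second smallest eigenvalue of the Laplacian $L$ of the \IB comparison graph.

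The bulk of the remaining work is the third step: showing $\lambda_2 \ge (1-o(1))\cdot 2m'/(n-1)$ with high probability under the random item assignment plus random \IB. Write $L=\sum_{j=1}^m L^{(j)}$, where $L^{(j)}$ is the Laplacian of the matching contributed by user $j$. Using that the \emph{marginal} distribution of each \IB pair is uniform over $\binom{[n]}{2}$ (even though pairs from the same user are forced to be disjoint) together with the identity $\sum_{i<i'}(e_i-e_{i'})(e_i-e_{i'})^\top=nI-\mathbf{1}\mathbf{1}^\top$, one gets $\mathbb{E}[L]=\frac{2m'}{n-1}(I-\frac{1}{n}\mathbf{1}\mathbf{1}^\top)$, whose second smallest eigenvalue equals $\frac{2m'}{n-1}$. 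Since $L^{(j)}$ is a disjoint union of $\lfloor k/2\rfloor$ edge Laplacians its eigenvalues lie in $\{0,2\}$, so $\|L^{(j)}\|\le 2$ and $(L^{(j)})^2=2L^{(j)}$, which yields the matrix-variance bound $\sum_j \mathbb{E}[(L^{(j)})^2]\preceq 2\mathbb{E}[L]\preceq\frac{4m'}{n-1}I$. Plugging into the matrix Bernstein inequality yields $\|L-\mathbb{E}[L]\|=O(\sqrt{m'\log n/n}+\log n)$, which under the hypothesis $mk\ge Cn\log n$ is $o(m'/n)$; Weyl's inequality then delivers the claimed lower bound on $\lambda_2$.

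Plugging this lower bound on $\lambda_2$ into the step-two estimate and using $m'\ge mk/4$ yields the stated bound with room to spare in the numerical constant. The main obstacle is the spectral concentration step: \IB pairs from a common user are correlated through the disjointness constraint, so the comparison graph is not an Erd\H{o}s--R\'enyi-type random graph and one cannot simply cite an i.i.d.\ edge-sampling result; the matrix-Bernstein bookkeeping must exploit the $\{0,2\}$ spectrum of a matching Laplacian, which is what makes the variance proxy match $\mathbb{E}[L]$ up to a factor of two.
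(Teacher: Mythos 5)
Your proposal is correct and follows essentially the same route as the paper: establish that the random \IB produces $mk/2$ mutually independent \PL pairwise comparisons, show via matrix Bernstein that the Laplacian of the resulting comparison graph concentrates around $\frac{mk}{n-1}\bigl(I-\frac{1}{n}\mathbf{1}\mathbf{1}^\top\bigr)$ so that $\lambda_2 \gtrsim \frac{mk}{2(n-1)}$, and then invoke the $k=2$ case of Theorem~\ref{thm:MLEUpperBound}. Your observation that $(L^{(j)})^2=2L^{(j)}$ for a matching Laplacian is precisely the identity underlying the paper's computation of $\mathbb{E}[Z_j^2]$, so the two arguments coincide in substance.
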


\begin{definition}
Given a partial ranking $\sigma$ over the subset $S \subset [n]$ of size $k$, the full breaking scheme (\FB)  breaks $\sigma$ into  all $\binom{k}{2}$ possible pairwise comparisons of form $\{i_t, i'_t, y_t\}_{t=1}^{\binom{k}{2}}$ such that
$y_t=1$ if  $\sigma^{-1}(i_t)< \sigma^{-1}(i'_t)$ and $0$ otherwise.
\end{definition}
If $\sigma$ is generated under the $\PL$ model, then the \FB breaks $\sigma$ into pairwise comparisons which are not independently generated under the $\PL$ model.
We pretend the pairwise comparisons induced from the full breaking are all independent and maximize the weighted log likelihood function given by
\begin{align}
\calL(\theta)= \sum_{j=1}^m \frac{1}{2(k_j-1)} \sum_{i,i' \in S_j}  \left( \theta_i \1{\sigma_j^{-1}(i)<\sigma_j^{-1}(i')} + \theta_{i'} \1{\sigma_j^{-1}(i)>\sigma_j^{-1}(i')}  - \log \left(e^{\theta_{i}}+e^{\theta_{i'}}\right) \right) \label{eq:loglikelihoodbreaking}
\end{align}
with the constraint that $\theta \in \Theta_b$.
Let $\widehat{\theta}_{\FB}$ denote the maximizer. Notice that we put the weight $\frac{1}{k_j-1}$ to adjust the contributions of the pairwise comparisons generated from the partial rankings over subsets with different sizes.
\begin{theorem} \label{thm:FBUpperBound}
With high probability,
\begin{align*}
\| \widehat{\theta}_{\FB}-\theta^\ast \|_2  \le 2(1+e^{2b})^2 \frac{\sqrt{mk \log n} } {\lambda_2}  .
\end{align*}
Furthermore, suppose $S_1^m$ are chosen independently and uniformly at random among all possible subsets of $[n]$. There exists a positive constant $C>0$ such that if $mk \ge C  n \log n$, then with high probability,
\begin{align*}
\| \widehat{\theta}_{\FB}-\theta^\ast \|_2  \le 4 (1+e^{2b})^2 \sqrt{  \frac{n^2 \log n} {mk}}.
\end{align*}
\end{theorem}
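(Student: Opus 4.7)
My plan is to exploit the fact that the weighted log-likelihood $\calL(\theta)$ in \eqref{eq:loglikelihoodbreaking} is concave and that its negative Hessian is controlled from below by the graph Laplacian $L$. This reduces the problem to bounding $\|\nabla \calL(\theta^*)\|_2$ and then invoking $\lambda_2$.

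\textbf{Step 1 (Hessian $\succeq \alpha L$).} A direct calculation gives, for $i \neq i'$, $\tfrac{\partial^2 \calL}{\partial \theta_i \partial \theta_{i'}}(\theta) = A_{ii'} \cdot \tfrac{e^{\theta_i + \theta_{i'}}}{(e^{\theta_i}+e^{\theta_{i'}})^2}$, with diagonals completing each row to sum to zero. Since the scalar factor is bounded below by $\alpha := (1+e^{2b})^{-2}$ (up to constants) uniformly over $\theta \in \Theta_b$, we get $-\nabla^2 \calL(\theta) \succeq \alpha L$ on $\Theta_b$.

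\textbf{Step 2 (Strong concavity on $\{\ones\}^\perp$).} Both $\theta^*$ and $\widehat{\theta}_{\FB}$ belong to $\Theta_b \subset \{\ones\}^\perp$, so $\Delta := \widehat{\theta}_{\FB} - \theta^*$ satisfies $\Delta^\top L \Delta \ge \lambda_2 \|\Delta\|_2^2$. Using Taylor expansion along the segment from $\theta^*$ to $\widehat{\theta}_{\FB}$ together with Step~1,
\begin{align*}
\calL(\widehat{\theta}_{\FB}) \le \calL(\theta^*) + \iprod{\nabla \calL(\theta^*)}{\Delta} - \tfrac{\alpha \lambda_2}{2}\|\Delta\|_2^2.
\end{align*}
Since $\calL(\widehat{\theta}_{\FB}) \ge \calL(\theta^*)$ by optimality, rearranging and Cauchy--Schwarz give $\|\Delta\|_2 \le \tfrac{2}{\alpha \lambda_2}\|\nabla \calL(\theta^*)\|_2$.

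\textbf{Step 3 (Bounding $\|\nabla \calL(\theta^*)\|_2$).} For each coordinate $i$,
\begin{align*}
\nabla_i \calL(\theta^*) = \sum_{j:\, i \in S_j} Y_{ij}, \quad Y_{ij} := \tfrac{1}{k_j-1}\sum_{i' \in S_j\setminus\{i\}} \left(\1{\sigma_j^{-1}(i) < \sigma_j^{-1}(i')} - \tfrac{e^{\theta^*_i}}{e^{\theta^*_i}+e^{\theta^*_{i'}}}\right).
\end{align*}
Even though the pairs inside one ranking are dependent, the \emph{independence of irrelevant alternatives} property of the \PL model says that the marginal probability that $i$ precedes $i'$ in $\sigma_j$ equals $e^{\theta^*_i}/(e^{\theta^*_i}+e^{\theta^*_{i'}})$, so $\mathbb{E}[Y_{ij}] = 0$. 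Moreover, $|Y_{ij}| \le 1$ deterministically, and the $\{Y_{ij}\}_j$ are independent across users. Hoeffding's inequality gives $\Pr\{|\nabla_i \calL(\theta^*)| > t\} \le 2\exp(-t^2/(2 d_i))$, and a union bound over $i \in [n]$ with $t = 2\sqrt{d_i \log n}$ yields $\|\nabla \calL(\theta^*)\|_2 \le c\sqrt{mk\log n}$ with high probability, using $\sum_i d_i = mk$. Combining with Step~2 proves the first inequality.

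\textbf{Step 4 (Random assignment).} For the second bound, write $L = \sum_{j=1}^m L_j$, where $L_j$ is the Laplacian of the clique on $S_j$ scaled by $1/(k_j-1)$. For $S_j$ uniform, a short calculation gives $\mathbb{E}[L] = \tfrac{mk}{n-1}(I - \ones\ones^\top/n)$, whose restriction to $\{\ones\}^\perp$ has all eigenvalues equal to $mk/(n-1)$. Applying the matrix Bernstein inequality to the independent centered summands $L_j - \mathbb{E}[L_j]$ (each of spectral norm $O(1)$) yields $\lambda_2(L) \ge mk/(2(n-1))$ with high probability whenever $mk \ge C n \log n$. Substituting this into the first inequality gives the stated $\sqrt{n^2 \log n / (mk)}$ bound.

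The main technical hurdle is Step~3: because pairs within the same ranking are not independent, one cannot apply scalar concentration to each pair separately. The crucial observation is that \emph{bundling} all pairs contributed by user $j$ to coordinate $i$ into the single statistic $Y_{ij}$ preserves both the zero mean (via IIA) and the boundedness $|Y_{ij}|\le 1$, while making the summands independent across $j$ and hence amenable to Hoeffding. Step~4 is routine matrix concentration, but one must check the summand norms and the intrinsic dimension to obtain the $n\log n$ threshold rather than a weaker condition.
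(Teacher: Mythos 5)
Your proposal is correct and follows essentially the same route as the paper's proof: the per-user bundling of pairwise terms into a bounded, mean-zero (via the \PL pairwise marginal), independent-across-users statistic followed by Hoeffding and a union bound for the gradient; the pointwise lower bound $-H(\theta)\succeq \frac{e^{2b}}{(1+e^{2b})^2}L$ on $\Theta_b$ combined with the Taylor/optimality inequality; and matrix Bernstein applied to $L=\sum_j L_j$ to get $\lambda_2\ge mk/(2(n-1))$ under random assignment. The only differences are immaterial constants.
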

\prettyref{thm:FBUpperBound} shows that the error rates of $\widehat{\theta}_{\FB}$ inversely depend on $\lambda_2$. When the comparison graph $G$ is an expander, i.e., $\lambda_2 \sim \lambda_n$, the upper bound is only larger than the Cram\'er-Rao lower bound by a logarithmic factor. The similar observation holds for the \ML estimator as shown in  \prettyref{thm:MLEUpperBound}.
With the random item assignment scheme,  \prettyref{thm:FBUpperBound} imply that the \FB only need $mk = \Omega(n \log n)$ to achieve the reliable inference, which is optimal up to a $\log n$ factor in view of the oracle lower bound in \prettyref{thm:Oracle}.

\paragraph*{Comparison to previous work}  The rank breaking schemes considered in \cite{Souriani13,AzariSoufiani_icml14} breaks the full rankings
according to rank positions while our schemes break the partial rankings according to the item indices. The results in \cite{Souriani13,AzariSoufiani_icml14}
establish the consistency of the generalized method of moments under the rank breaking schemes when the data consists of full rankings. In contrast, \prettyref{cor:IBUpperBound} and
\prettyref{thm:FBUpperBound} apply to the more general setting with partial rankings and provide the finite-sample error rates, proving the optimality of the random \IB and \FB with the random item assignment scheme. 

\subsection{Numerical experiments}
Suppose there are $n=1024$ items and the underlying preference vector $\theta^\ast$ is uniformly distributed over $[-b, b]$.
We generate $d$ full rankings over $1024$ items according to the $\PL$ model with parameter $\theta^\ast.$
Fix a $k \in \{ 512, 256,\ldots, 2 \}$. We break each full ranking $\sigma$ into $n/k$ partial rankings over subsets of size $k$ as follows: Let $\{S_j\}_{j=1}^{n/k}$ denote a partition of $[n]$ generated uniformly at random such that $S_j \cap S_{j'} = \emptyset$ for $j \neq j'$ and $|S_j|=k$ for all $j$; generate $\{\sigma_j\}_{j=1}^{n/k}$ such that $\sigma_j$ is the partial ranking over set $S_j$ consistent with $\sigma$.
In this way, in total we generate $m=dn/k$ $k$-way comparisons which are all independently generated from the \PL model. To compute the \ML estimator of $\theta^\ast$ based on the generated partial rankings, we apply the minorization-maximation (MM) algorithm proposed in \cite{hunter04}. We measure the estimation error by the normalized mean square error (MSE) defined as $ \frac{mk}{n^2} \| \widehat{\theta}_{\ML} -\theta^\ast\|^2$.



We run the simulation with $b=0, 2$ and $d=16, 32, 64,128$. The results are depicted in Fig.~\ref{fig:varying1024}.
We also plot the Cram\'er-Rao limit given by $\left(1-\frac{1}{k} \sum_{l=1}^k \frac{1}{l} \right)^{-1}$ as per \prettyref{thm:CramerRao}. The oracle lower bound in \prettyref{thm:Oracle}
implies that the normalized MSE is at least $1$.
We can see that the normalized MSE approaches the Cram\'er-Rao limit as $d$ increases and achieves the oracle lower bound if further $k$ becomes large, suggesting the \ML estimator is minimax-optimal. Moreover, with a large number of partial rankings available, i.e., $d$ is large enough, when $k$ is decreased from $n$ to $2$, 
the normalized MSE increases roughly by a factor of $4$ if $b=0$ and $6$ if $b=2$, suggesting that the random \IB is minimax-optimal up to a $\log n$ factor.  
Also, we observe that the normalized MSE is not as sensitive to the value of $b$ as claimed by our upper bounds given by \prettyref{cor:MLEUpperBound}. Notice that in the case with $b=2$, according to the \PL model, the item with the highest preference is ranked higher than the item with lowest preference with probability $\frac{e^{4}}{1+e^{4}} \approx 0.98$.

%
\begin{figure}[ht]
\centering
\subfigure{
\includegraphics[width=3in]{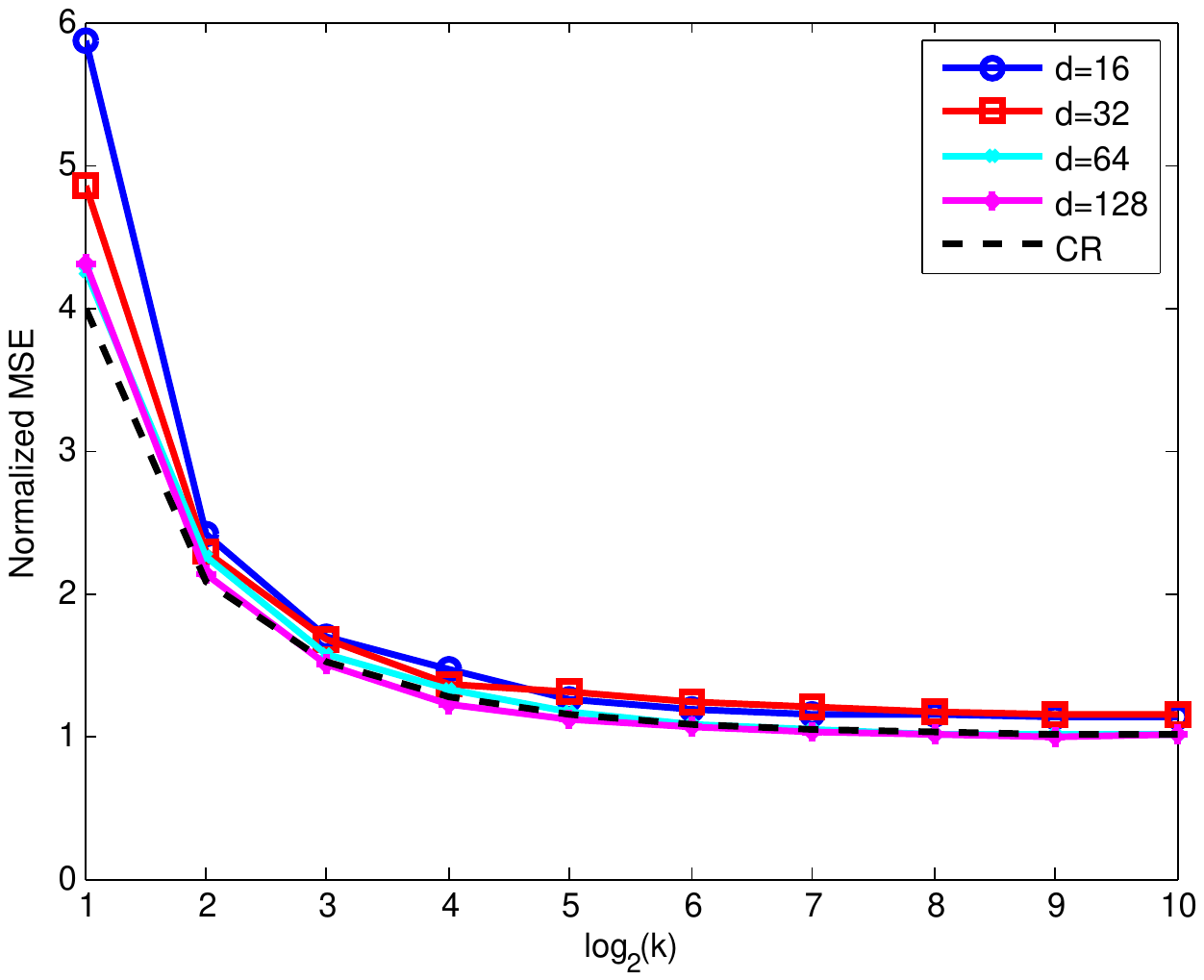}
\label{fig:varying1024b0}
}
\quad
\subfigure{
\includegraphics[width=3in]{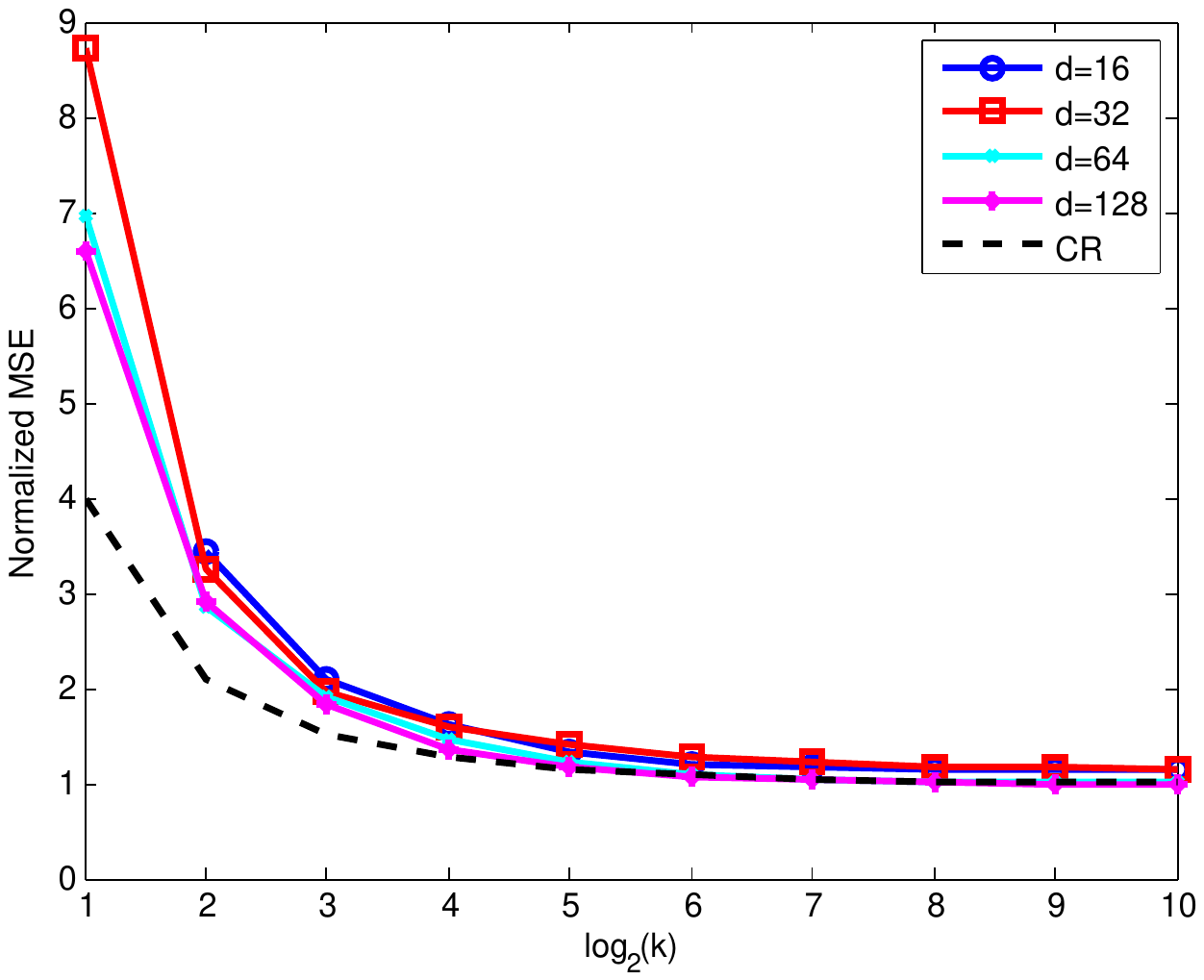}
\label{fig:varying1024b2}}
\caption{The \ML estimator based on $nd/k$ $k$-way comparisons: (a):$b=0$; (b):$b=2$.}
\label{fig:varying1024}
\end{figure}
%
%
%

\section{Proofs}
We introduce some additional notations used in the proof.
For a vector  $x$, let $\|x\|_2$ denote the usual $l_2$ norm.
Let $\mathbf{1}$ denote the all-one vector and $\mathbf{0}$ denote the all-zero vector with the appropriate dimension.
Let $\calS^n$ denote the set of $n \times n$ symmetric matrices with real-valued entries.
 For $X \in \calS^n$, let $\lambda_1(X) \le \lambda_2(X) \le \cdots  \le \lambda_n(X)$ denote its eigenvalues sorted in increasing order. Let $\Tr(X)=\sum_{i=1}^n \lambda_i(X) $ denote its trace and $\|X\|=\max\{ - \lambda_1(X), \lambda_n(X)\}$ denote its spectral norm.
For two matrices $X, Y \in \calS^n$, we write $X\le Y$ if $Y-X$ is positive semi-definite, i.e., $\lambda_1(Y-X) \ge 0$. 
Recall that $\calL (\theta)$ is the log likelihood function.
The first-order partial derivative of $\calL(\theta)$ is given by
\begin{align}
\nabla_i \calL(\theta) = \sum_{j: i \in S_j} \sum_{\ell=1}^{k_j-1}  \1{\sigma_j^{-1}(i) \ge \ell}  \left[ \1{\sigma_{j}(\ell)=i} -  \frac{\exp(\theta_i) }{ \exp( \theta_{\sigma_j(\ell) }) + \cdots+ \exp(\theta_{\sigma_j (k_j)  } )} \right], \forall i \in [n] \label{eq:gradient}
\end{align}
and the Hessian matrix $H(\theta) \in \calS^n$ with $H_{ii'}(\theta)=\frac{\partial^2 \calL(\theta)}{\partial \theta_i \partial \theta_{i'}}$ is given by
\begin{align}
H(\theta)=-\frac{1}{2} \sum_{j=1}^m \sum_{i,i' \in S_j} (e_i-e_{i'} ) (e_i - e_{i'} )^\top \sum_{\ell=1}^{k_j-1}  \frac{\exp(\theta_i + \theta_{i'})  \1{\sigma_j^{-1}(i), \sigma_j^{-1}(i') \ge \ell}   }{ [ \exp( \theta_{\sigma_j(\ell) }) + \cdots+ \exp(\theta_{\sigma_j (k_j)  } ) ]^2 }. \label{eq:Hessian}
\end{align}
It follows from the definition that $-H(\theta)$ is positive semi-definite for any $\theta \in \reals^n$. Define $L_j \in S^n$ as
\begin{align*}
L_j = \frac{1}{2(k_j-1)}  \sum_{i,i' \in S_j}  (e_i-e_{i'} ) (e_i - e_{i'} )^\top,
\end{align*}
and then the Laplacian of the pairwise comparison graph $G$ satisfies $L=\sum_{j=1}^m L_j$.

 \subsection{Proof of \prettyref{thm:Oracle}}
We first introduce a  key auxiliary result used in the proof. Let $F$ be a fixed CDF (to be used in the Thurstone model),
let $b >0$ and suppose $\theta$ is a parameter to be estimated with $\theta \in [-b,b]$
from observation $U=(U_1, \ldots  , U_d),$   where the $U_i$'s are independent with the common CDF given by $F(c-\theta)$.
The following proposition gives a lower bound on the average MSE for a fixed prior distribution based on Van Trees inequality \cite{gill1995}.
\begin{proposition}\label{prop:CR_exp1}
Let $p_0$ be a probability density on $[-1,1]$ such that $p_0(1)=p_0(-1)=0$ and define the prior density of $\Theta$ as $p (\theta)=\frac{1}{b} p_0( \frac{\theta}{b})$.
Then for any estimator $T(U)$ of  $\Theta$,
$$
E[(\Theta - T(U))^2] \geq \frac{1}{d} \frac{1}{ I(\mu)+ I(p_0)/(b^2 d)},
$$
where $\mu$ is the probability density function of $F$ with $I(\mu)=\int \frac{ \left( \mu'(x) \right)^2 }{\mu(x)} dx$ and $I(p_0)=\int_{-1}^1 \frac{ \left( p_0'(\theta) \right)^2 }{p_0(\theta)} d\theta$. 
\end{proposition}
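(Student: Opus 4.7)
The plan is to apply the one-dimensional Bayesian Cramér--Rao (Van Trees) inequality to the parameter $\Theta$ with prior density $p$, compute the two Fisher information quantities that appear on the right-hand side, and verify that they match the bound in the proposition.

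First I would recall the Van Trees inequality in the form convenient here: for any estimator $T(U)$ and any absolutely continuous prior density $p$ on $[-b,b]$ with $p(\pm b)=0$,
\begin{equation*}
E[(\Theta - T(U))^2] \;\geq\; \frac{1}{\int_{-b}^{b} I_U(\theta)\, p(\theta)\,d\theta \;+\; I(p)},
\end{equation*}
where $I_U(\theta)$ is the Fisher information of the observation $U=(U_1,\ldots,U_d)$ at parameter value $\theta$, and $I(p) = \int (p'(\theta))^2/p(\theta)\, d\theta$ is the Fisher information of the prior. The boundary condition $p_0(\pm 1)=0$ transfers to $p(\pm b)=0$, which is exactly what is needed to justify the integration by parts underlying Van Trees.

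Second, I would compute the sample Fisher information. Since the $U_i$'s are i.i.d. with location-family density $\mu(c-\theta)$, a direct calculation gives
\begin{equation*}
I_{U_i}(\theta) \;=\; \int \frac{(\mu'(c-\theta))^2}{\mu(c-\theta)}\,dc \;=\; \int \frac{(\mu'(x))^2}{\mu(x)}\,dx \;=\; I(\mu),
\end{equation*}
independent of $\theta$, so by independence $I_U(\theta) = d\, I(\mu)$ for every $\theta$, and hence $\int I_U(\theta) p(\theta)\, d\theta = d\, I(\mu)$.

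Third, I would compute the prior Fisher information. Writing $p(\theta) = \frac{1}{b} p_0(\theta/b)$ gives $p'(\theta) = \frac{1}{b^2} p_0'(\theta/b)$, so the change of variables $u = \theta/b$ yields
\begin{equation*}
I(p) \;=\; \int_{-b}^{b} \frac{(p_0'(\theta/b))^2/b^4}{p_0(\theta/b)/b}\,d\theta \;=\; \frac{1}{b^2}\int_{-1}^{1} \frac{(p_0'(u))^2}{p_0(u)}\,du \;=\; \frac{I(p_0)}{b^2}.
\end{equation*}
Plugging both quantities into Van Trees gives $E[(\Theta-T(U))^2] \geq \bigl(d\,I(\mu) + I(p_0)/b^2\bigr)^{-1}$, which is exactly $\frac{1}{d}\bigl(I(\mu) + I(p_0)/(b^2 d)\bigr)^{-1}$, as claimed.

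I do not expect any real obstacle here; this is essentially a bookkeeping application of Van Trees. The only thing that needs a little care is justifying the use of Van Trees (regularity of $\mu$ and the boundary vanishing of $p_0$), and doing the scaling computation $I(p) = I(p_0)/b^2$ correctly. Both are standard.
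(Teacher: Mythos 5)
Your proof is correct and follows essentially the same route as the paper's: apply the Van Trees inequality with the scaled prior $p(\theta)=\tfrac{1}{b}p_0(\theta/b)$, use $I_U(\theta)=d\,I(\mu)$ for the location family, and compute $I(p)=I(p_0)/b^2$ by the change of variables $u=\theta/b$. Your write-up is in fact slightly more explicit than the paper's about the regularity/boundary conditions and the location-family calculation.
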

\begin{proof}
It follows from the Van Trees inequality that
\begin{align*}
E[(\Theta - T(U))^2]   \geq \frac{1}{ \int I(\theta) p(\theta) d\theta  + I (p)},
\end{align*}
where the Fisher information $I(\theta)=dI(\mu)$ and
\begin{align*}
I(p)= \int_{-b}^b \frac{ \left( p'(\theta) \right)^2 }{p(\theta)} d\theta = \frac{1}{b^2} \int_{-1}^1 \frac{ \left( p_0'(\theta) \right)^2 }{p_0(\theta)} d\theta = \frac{1}{b^2} I(\lambda_0).
\end{align*}

\end{proof}

\begin{proof}[Proof of \prettyref{thm:Oracle}]
Let $\widehat{\theta}$ be a given estimator.
The minimax MSE for $\widehat{\theta}$  is greater than or equal to the average MSE for a given prior distribution on
$\theta^*.$     Let $p_0(\theta) =  \cos^2(\pi \theta/2)$,  then $I(p_0)=\pi^2$.
Define $p (\theta)=\frac{1}{b} p_0( \frac{\theta}{b})$.
If $n$ is even we use the following prior distribution.  The prior distribution of $\theta^*_i$ for $i$ odd is $p(\theta)$ and for $i$ even, $\theta^*_i \equiv -\theta^*_{i-1}.$      If $n$ is odd use the same distribution for
$\theta^*_1$ through $\theta^*_{n-1}$ and set $\theta^*_n\equiv 0.$   Note that $\theta^* \in \Theta_b$ with probability one.
For simplicity, we assume $n$ is odd in the rest of this proof; the modification for $n$ even is trivial.
We use the genie argument, so that the observer can see the hidden utilities in the Thurstone model.
The estimation of $\theta^*$ decouples into $\lfloor \frac{n}{2} \rfloor$
disjoint problems, so we can focus on the estimation of $\theta_1$ from the vector of random variables $U=(U_1, \ldots , U_{d_1})$
associated with item 1 and the vector of random variables $V=(V_1, \ldots , V_{d_2})$ associated with item 2.  The distribution functions of the $U_i$'s are
all $F(c-\theta_1^\ast)$ and the distribution functions of the $V_i$'s are all $F(c + \theta_1^\ast)$, and
the $U$'s and $V$'s are all mutually independent given $\theta^*.$  Recall that $\mu$ is the probability density function of $F$, i.e., $\mu=F'$.
The Fisher information for each of the $d_1+d_2$ observations is $I(\mu)$,
so that \prettyref{prop:CR_exp1} carries over to this situation with $d=d_1+d_2.$   Therefore, for any estimator $T(U,V)$ of $\Theta^*_1$  (the random
version of $\theta_1^*$),
$$
E[(\Theta_1^* - T(U,V))^2] \geq \frac{1}{d_1+d_2} \frac{1}{I(\mu)+\pi^2/(b^2(d_1+d_2)) }
$$
By this reasoning, for any odd value of $i$ with $1 \leq i <   n$  we have
\begin{align*}
E[(\widehat{\theta}_i- \theta^*_i)^2]  +  E[(\widehat{\theta}_{i+1} - \theta^*_{i+1})^2] & \geq \frac{2}{I(\mu)+\pi^2/(b^2 (d_1+d_2)  ) } \frac{1}{d_{i}+d_{i+1}} \\
& \geq  \frac{1}{2I(\mu)+2 \pi^2/(b^2 (d_1+d_2) ) } \left(\frac{1}{d_{i+1}}  + \frac{1}{d_{i+2}}\right).
\end{align*}
Summing over all odd values of $i$ in the range $1\leq i <n$  yields the theorem. Furthermore, since $\sum_{i=1}^n d_i=mk$, by Jensen's inequality,
$
\sum_{i=2}^{n} \frac{1}{d_i}  \ge \frac{(n-1)^2}{\sum_{i=2}^n d_i} \ge \frac{(n-1)^2}{mk}.
$
\end{proof}

\subsection{Proof of \prettyref{thm:CramerRao}}
The Fisher information matrix is defined as $I(\theta) = - \mathbb{E}_{\theta} [ H(\theta)]$ and given by
\begin{align*}
I(\theta)=\frac{1}{2} \sum_{j=1}^m \sum_{i,i' \in S_j} (e_i-e_{i'} ) (e_i - e_{i'} )^\top \sum_{l=1}^{k_j-1} \mathbb{P}_{\theta}[ \sigma_j^{-1}(i), \sigma_j^{-1}(i') \ge \ell ]  \frac{e^{ \theta_i+\theta_{i'}} }{ [e^{\theta_{\sigma_j(\ell) }} + \cdots+ e^{\theta_{\sigma_j (k_j)  } } ]^2 }.
\end{align*}
Since $-H(\theta)$ is positive semi-definite, it follows that $I(\theta)$ is positive semi-definite. Moreover, $\lambda_1(I(\theta))$ is zero and the corresponding eigenvector is the normalized all-one vector. Fix any unbiased estimator $\widehat{\theta} $ of $\theta \in \Theta_b$. Since $\widehat{\theta} \in \calU$, $\widehat{\theta}-\theta$ is orthogonal to $\mathbf{1}$. The Cram\'er-Rao lower bound then implies that
$
\mathbb{E}[ \| \widehat{\theta} -\theta \|^2 ] \ge \sum_{i=2}^n \frac{1}{\lambda_i (I(\theta) )}.
$
Taking the supremum over both sides gives
\begin{align*}
 \sup_{\theta} \mathbb{E}[ \| \hat{\theta} -\theta \|^2 ] \ge \sup_{\theta} \sum_{i=2}^n \frac{1}{\lambda_i (I(\theta) )} \ge   \sum_{i=2}^n \frac{1}{\lambda_i (I(0) )}.
\end{align*}
If $\theta$ equals the all-zero vector, then
\begin{align*}
\mathbb{P}[ \sigma_j^{-1}(i), \sigma_j^{-1}(i') \ge \ell] = \frac{(k_j-2)(k_j-3) \cdots (k_j- \ell) }{k_j(k_j-1) \cdots (k_j-\ell+2) } = \frac{(k_j-\ell+1)(k_j-\ell)}{k_j(k_j-1)}.
\end{align*}
It follows from the definition that
\begin{align*}
 I(0) = \frac{1}{2} \sum_{j=1}^m \sum_{i,i' \in S_j} (e_i-e_{i'} ) (e_i - e_{i'} )^\top \sum_{l=1}^{k_j-1} \frac{k_j-\ell}{k_j (k_j-1) (k_j- \ell+1)}
 \le  \left(1-\frac{1}{k_{\max} } \sum_{\ell=1}^{k_{\max}} \frac{1}{\ell } \right) L.
\end{align*}
By Jensen's inequality,
\begin{align*}
\sum_{i=2}^n \frac{1}{\lambda_i} \ge \frac{(n-1)^2}{\sum_{i=2}^n \lambda_i} = \frac{(n-1)^2}{\Tr(L)} =  \frac{(n-1)^2}{\sum_{i=1}^n d_i}=\frac{(n-1)^2}{mk}. 
\end{align*}

\subsection{Proof of \prettyref{thm:MLEUpperBound}}
The main idea of the proof is inspired from the proof of \cite[Theorem 4]{Shah12}. We first introduce several key auxiliary results used in the proof. Observe that $ \Expect_{\theta^\ast} [\nabla L(\theta^\ast)] =0$. The following lemma
upper bounds the deviation of $\nabla L(\theta^\ast)$ from its mean.

\begin{lemma} \label{lmm:Gradient}
With probability at least $1-\frac{2e^2}{n}$,
\begin{align}
\|\nabla {\cal L}(\theta^\ast)\|_2 \le   \sqrt{2mk\log n}    \label{eq:gradientbound}
\end{align}
\end{lemma}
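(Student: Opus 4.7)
The strategy is to express $\nabla \calL(\theta^*)$ as a sum of vector-valued martingale differences under the sequential description of the PL model and then invoke a Hilbert-space martingale tail inequality. For each user $j$, let $A_{j,\ell} := \{\sigma_j(\ell),\dots,\sigma_j(k_j)\}$ denote the set of items still available at step $\ell$, and write $p^0_{j,\ell,i} := e^{\theta^*_i}/\sum_{i' \in A_{j,\ell}} e^{\theta^*_{i'}}$ for $i \in A_{j,\ell}$, the conditional PL selection probabilities. Under the natural filtration (revealing, user-by-user, the picks $\sigma_j(1),\sigma_j(2),\ldots$ in order), $\sigma_j(\ell)$ is distributed with weights $p^0_{j,\ell,\cdot}$ over $A_{j,\ell}$. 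The coordinate expression \prettyref{eq:gradient} then rewrites as
\[\nabla \calL(\theta^*) = \sum_{j=1}^m \sum_{\ell=1}^{k_j-1} \bm{X}_{j,\ell}, \qquad X_{j,\ell,i} = \indc{i \in A_{j,\ell}}\!\left(\indc{\sigma_j(\ell) = i} - p^0_{j,\ell,i}\right),\]
and each $\bm{X}_{j,\ell} \in \reals^n$ is a mean-zero martingale increment.

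\textbf{Bounds on the increments.} Since exactly one coordinate $i^* = \sigma_j(\ell)$ has $\indc{\sigma_j(\ell)=i}=1$, a direct expansion gives
\[\|\bm{X}_{j,\ell}\|_2^2 = (1 - p^0_{j,\ell,i^*})^2 + \!\!\sum_{i \in A_{j,\ell}\setminus\{i^*\}}\!\! (p^0_{j,\ell,i})^2 \le 1 - 2p^0_{j,\ell,i^*} + \!\!\sum_{i \in A_{j,\ell}}\!\! (p^0_{j,\ell,i})^2 \le 2,\]
using $\sum_i p^0_{j,\ell,i} = 1$ and $\sum_i (p^0_{j,\ell,i})^2 \le 1$. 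Taking conditional expectation of $\|\bm{X}_{j,\ell}\|_2^2$ and averaging over the choice of $i^*$ with probabilities $p^0_{j,\ell,\cdot}$, a short simplification yields the sharper
\[\Expect[\|\bm{X}_{j,\ell}\|_2^2 \mid \mathrm{history~through~step~}\ell{-}1] = 1 - \!\!\sum_{i \in A_{j,\ell}}\!\! (p^0_{j,\ell,i})^2 \le 1,\]
so the predictable quadratic variation is bounded almost surely by $\sum_{j=1}^m (k_j-1) \le mk$.

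\textbf{Concentration and main obstacle.} Apply a Hilbert-space martingale tail inequality (Pinelis' 2-smooth bound, or a vector Freedman-type inequality) to the partial sums $\sum_{j,\ell} \bm{X}_{j,\ell}$. With increments almost surely bounded in norm by $\sqrt 2$ and predictable quadratic variation bounded by $mk$, the Bernstein-type version produces a tail of the form
\[\Prob\!\left[\|\nabla\calL(\theta^*)\|_2 \ge t\right] \le 2e^2 \exp\!\left(-\frac{t^2}{2mk}\right),\]
and plugging $t = \sqrt{2mk\log n}$ gives the advertised bound $2e^2/n$. The main difficulty is making the constants match: plain Hilbert-space Azuma uses only the worst-case $\sqrt 2$ bound on $\|\bm{X}_{j,\ell}\|_2$, which loses a factor of two in the exponent and only yields $\sqrt{4mk\log n}$ at the same confidence, so one genuinely needs the Bernstein refinement that exploits the conditional second-moment estimate of Step~2. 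An alternative, potentially cleaner, route is to apply scalar Bernstein to $\iprod{v}{\nabla\calL(\theta^*)}$ on a $1/4$-net of the unit sphere and union-bound, but this pays an extra dimension factor that needs to be absorbed into the $2e^2$ constant.
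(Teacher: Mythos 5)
Your decomposition is exactly the one in the paper: the authors also generate each ranking sequentially, view $\nabla\calL(\theta^\ast)$ as the terminal value of an $\reals^n$-valued martingale with increments $e_{\sigma_j(\ell)}-p^{(j,\ell)}$ over $\sum_j(k_j-1)=m(k-1)$ steps, and then invoke a Hilbert-space Azuma--Hoeffding inequality (Hayes 2005, Theorem~1.8), whose $2e^2 e^{-\delta^2/(2N)}$ tail is the source of the $2e^2/n$ constant. The only substantive divergence is at the step you flag as the main obstacle: the paper simply asserts that each increment has norm at most one and feeds that into vector Azuma, whereas your computation showing only $\|X_{j,\ell}\|_2^2\le 2$ in the worst case is correct (take an item with selection probability near $0$ beating one with probability near $1$; the squared norm approaches $2$), so your concern is legitimate and in fact exposes a looseness in the paper's own argument rather than a defect in yours. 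Your repair --- the conditional second-moment identity $\Expect[\|X_{j,\ell}\|_2^2\mid\calF]=1-\sum_i (p^{(j,\ell)}_i)^2\le 1$, giving predictable quadratic variation at most $mk$, combined with a vector Freedman/Pinelis--Bernstein inequality --- is the right way to recover the claimed constant; to close it fully you would need to cite a specific such inequality and note that the Bernstein correction term (of order $\sqrt{2}\,t/3$ against variance $mk$) only perturbs the exponent by a $(1+o(1))$ factor once $mk\gtrsim\log n$, which is consistent with the regime in which the lemma is applied. The $\epsilon$-net alternative is unnecessary and, as you observe, would cost a dimension factor that the stated constant cannot absorb.
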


\begin{proof}
The idea of the proof is to view $\nabla {\cal L}(\theta^\ast)$ as the final value of a discrete time vector-valued
martingale with values in $\IR^n.$   Consider a user that ranks
items $1, \ldots , k.$     The PL model for the ranking can be generated in a series of $k-1$ rounds.   In the first
round, the top rated item for the user is found.  Suppose it is item $I$.   This contributes the term $e_I - (p_1, p_2 , \ldots , p_k, 0 , 0, \ldots  , 0)$
to $\nabla {\cal L}(\theta^\ast),$  where $p_i=P\{I=i\}.$   This contribution is a mean zero random vector in
$\IR^n$ and its norm is less than one.   For notational convenience, suppose $I=k.$     In the second round, item $k$
is removed from the competition, and an item $J$ is to be selected at random from
among $\{1, \ldots , k-1\}.$   If $q_j$ denotes $P\{J=j\}$ for $1\leq j \leq k-1,$
then the contribution of the second round for the user to $\nabla {\cal L}(\theta^\ast)$ is the random vector
$e_J - (q_1, q_2 , \ldots , q_{k-1}, 0 , 0, \ldots  , 0),$
which has conditional mean zero (given $I$) and norm less than or equal to one.
Considering all $m$ users and $k_j-1$ rounds for user $j$, we see that $\nabla {\cal L}(\theta^\ast)$ is the value
of a discrete-time martingale at time $m(k-1)$ such that the martingale has initial value zero and  increments with
norm bounded by one.   By the vector version of the Azuma-Hoeffding inequality found in \cite[Theorem 1.8]{Hayes2005}
we have
$$
\mathbb{P}\{  \| \nabla {\cal L}(\theta^\ast) \|  \geq \delta  \}  \leq  2e^2 e^{-\frac{\delta^2}{2m(k-1)}},
$$
which implies the result.
\end{proof}

Observed that $ -H(\theta)$ is positive semi-definite with the smallest eigenvalue equal to zero. The following lemma
lower bounds its second smallest eigenvalue.

\begin{lemma} \label{lmm:Hessian}
Fix any $\theta \in \Theta_b$. Then
\begin{align}
\lambda_2\left(-H(\theta)\right) \ge
\left\{
    \begin{array}{rl}
    \frac{e^{2b}}{(1+e^{2b})^2} \lambda_2 & \text{If } k=2, \\
     \frac{1}{4e^{4b}} \left( \lambda_2 - 16 e^{2b} \sqrt{\lambda_n \log n} \right) & \text{If } k>2 ,
    \end{array} \right.
\end{align}
where the inequality holds with probability at least $1-n^{-1}$ in the case with $k>2$.
\end{lemma}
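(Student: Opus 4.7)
The plan is to split into two cases by $k$.

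For $k=2$, the argument is direct: each user $j$ contributes a single rank-one term $w_j(e_{i_j}-e_{i'_j})(e_{i_j}-e_{i'_j})^\top$ to $-H(\theta)$, where $w_j=e^{\theta_{i_j}+\theta_{i'_j}}/(e^{\theta_{i_j}}+e^{\theta_{i'_j}})^2$. Setting $r=e^{\theta_{i_j}-\theta_{i'_j}}\in[e^{-2b},e^{2b}]$ reduces $w_j$ to $r/(1+r)^2$, a unimodal function of $r$ maximized at $r=1$, so its minimum on this interval is attained at the endpoints and equals $e^{2b}/(1+e^{2b})^2$. Therefore $-H(\theta)\succeq\frac{e^{2b}}{(1+e^{2b})^2}L$, and since $-H(\theta)$ and $L$ share the all-ones vector in their kernel, Weyl's inequality gives the stated bound on $\lambda_2(-H(\theta))$.

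For $k>2$ I would decompose $-H(\theta)=I(\theta)+\bigl(-H(\theta)-I(\theta)\bigr)$, where $I(\theta)=\mathbb{E}_\theta[-H(\theta)]$ is the Fisher information matrix, and bound the two pieces separately. \textbf{Step 1 (expected Hessian):} Show $I(\theta)\succeq\frac{1}{4e^{4b}}L$. Generating $\sigma_j$ sequentially round by round, the contribution from round $\ell$ given the surviving set $R_\ell$ equals $D_{p^{(\ell)}}-p^{(\ell)}(p^{(\ell)})^\top$ with $p^{(\ell)}_r=e^{\theta_r}/\sum_{s\in R_\ell}e^{\theta_s}$. Its quadratic form is the variance $\frac{1}{2}\sum_{r,r'\in R_\ell}p^{(\ell)}_rp^{(\ell)}_{r'}(x_r-x_{r'})^2$, and because $p^{(\ell)}_r\ge e^{-2b}/|R_\ell|$ we get the pointwise domination $M_{j,\ell}(\theta\mid R_\ell)\succeq e^{-4b}M_{j,\ell}(0\mid R_\ell)$. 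Summing over rounds, integrating against $\mathbb{P}_\theta$ (using $\mathbb{P}_\theta/\mathbb{P}_0\ge e^{-2b(\ell-1)}$ as a Radon--Nikodym bound to compare histories against $\theta=0$), and applying the explicit computation from the proof of Theorem~\ref{thm:CramerRao} that gives $I_j(0)=(1-H_{k_j}/k_j)L_j$ with $1-H_{k_j}/k_j\ge 1/4$, yields $I(\theta)\succeq\frac{1}{4e^{4b}}L$.

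\textbf{Step 2 (concentration):} Apply matrix Bernstein to $-H(\theta)-I(\theta)=\sum_{j=1}^m X_j$, where $X_j=-H_j(\theta)-I_j(\theta)$ are independent zero-mean symmetric matrices. The uniform norm bound comes from $\|M_{j,\ell}\|\le e^{2b}/|R_\ell|$, giving $\|-H_j(\theta)\|=O(e^{2b})$. The matrix variance statistic $\|\sum_j\mathbb{E}[X_j^2]\|$ is controlled by the product of this norm bound and $\|I(\theta)\|$, and $\|I(\theta)\|$ is bounded by a constant times $\|L\|=\lambda_n$. Matrix Bernstein with the hypothesis $\lambda_n\ge C\log n$ then yields $\|-H(\theta)-I(\theta)\|\le\frac{4}{e^{2b}}\sqrt{\lambda_n\log n}$ with probability at least $1-n^{-1}$. \textbf{Step 3 (combine):} Since $I(\theta)$, $-H(\theta)$, and $L$ all have $\mathbf{1}$ in their kernels, Weyl's inequality gives $\lambda_2(-H(\theta))\ge\lambda_2(I(\theta))-\|{-H(\theta)-I(\theta)}\|\ge\frac{\lambda_2}{4e^{4b}}-\frac{4}{e^{2b}}\sqrt{\lambda_n\log n}$, which is the claimed bound.

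The main obstacle will be Step~1: the conditional bound at round $\ell$ carries a factor $1/|R_\ell|^2$, and a naive use of only round 1 produces a spurious $1/k_{\max}$ factor that would invalidate the $k$-independent constant $\frac{1}{4e^{4b}}$. Avoiding this requires summing several round contributions and using that the cumulative effect, even after the Radon--Nikodym degradation across rounds, recovers a constant fraction of the $\theta=0$ Fisher information. The matrix concentration bookkeeping in Step~2 is routine by comparison.
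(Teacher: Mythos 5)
Your $k=2$ case and your overall architecture for $k>2$ (lower bound the expected Hessian by a constant multiple of $L$, control the fluctuation by matrix Bernstein, combine via Weyl on the subspace orthogonal to $\mathbf{1}$) coincide with the paper's. The genuine gap is exactly where you flag it: Step 1. Your mechanism for proving $\mathbb{E}_\theta[-H(\theta)]\succeq \frac{1}{4e^{4b}}L$ is a change of measure comparing the law of the ranking history under $\theta$ to the law under $\theta=0$, using $\mathrm{d}\mathbb{P}_\theta/\mathrm{d}\mathbb{P}_0\ge e^{-2b(\ell-1)}$ for a history of length $\ell-1$. That factor is exponentially small in $\ell$ and essentially tight (consider a history that repeatedly selects the lowest-$\theta$ surviving item). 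Since the conditional round-$\ell$ contribution carries a $1/|R_\ell|^2$ factor, recovering a $k$-independent constant forces you to accumulate $\Theta(k_j)$ rounds, by which point the Radon--Nikodym factor has degraded to $e^{-\Theta(b k_j)}$. So ``summing several round contributions'' cannot rescue the argument: the constant extractable this way necessarily degrades with $k$, and the claimed $\frac{1}{4e^{4b}}$ is out of reach by this route.

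The paper's key idea, absent from your proposal, avoids comparing $\mathbb{P}_\theta$ with $\mathbb{P}_0$ altogether. First, each \emph{realized} round contribution $\mathrm{diag}(p^{(j,\ell)})-p^{(j,\ell)}(p^{(j,\ell)})^\top$ is bounded below deterministically (the paper's Claim~\ref{clm:ev_bound}, i.e.\ your pointwise-domination step) by $\frac{e^{-2b}}{2(k_j-\ell+1)^2}\sum_{i,i'\in S^{(j,\ell)}}(e_i-e_{i'})(e_i-e_{i'})^\top$; summing over $\ell$ and replacing $k_j-\ell+1$ by $k_j$ gives $-e^{2b}H(\theta)\succeq\tilde L$, where the $(i,i')$ weight of the random matrix $\tilde L$ is $k_j^{-2}$ times the number of rounds in which both $i$ and $i'$ survive. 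Second --- and this is the step you are missing --- the $\mathbb{P}_\theta$-expectation of that survival count has the exact closed form $1+\sum_{i''\ne i,i'}e^{\theta_{i''}}/(e^{\theta_i}+e^{\theta_{i'}}+e^{\theta_{i''}})\ge (k_j+1)/(3e^{2b})$, which loses only a single factor of $e^{2b}$ independent of $k_j$; this yields $\mathbb{E}_\theta[\tilde L]\succeq \frac{1}{4e^{2b}}L$ and hence the stated constant. A secondary issue: in your Step 2 the bound $\|-H_j(\theta)\|=O(e^{2b})$ is wrong --- summing $\max_i p_i^{(j,\ell)}\le e^{2b}/(k_j-\ell+1)$ over rounds gives $O(e^{2b}\log k_j)$ --- whereas the paper's increments $Y_j$ satisfy $\|Y_j\|\le 1$ and $Y_j^2\preceq 2L_j$ exactly; this only costs logarithmic factors, unlike the Step 1 gap.
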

\begin{proof}
{\bf Case $k_j=2,\forall j \in [m]$}: The Hessian matrix simplifies as
\begin{align*}
H(\theta)=-\frac{1}{2} \sum_{j=1}^m  \sum_{i,i' \in S_j}  (e_i-e_{i'} ) (e_i - e_{i'} )^\top \frac{\exp(\theta_i)  }{ \exp( \theta_{i}) +  \exp(\theta_{i'} ) }
\frac{\exp(\theta_{i'})  }{ \exp( \theta_{i}) +  \exp(\theta_{i'} ) }.
\end{align*}
Observe that $H(\theta)$ is deterministic given $S_1^m$. Since $|\theta_i | \le b, \forall i \in [n]$,
\begin{align*}
\frac{\exp(\theta_i)\exp(\theta_{i'} ) }{ \left[ \exp( \theta_{i}) +  \exp(\theta_{i'} ) \right]^2 } \ge \frac{e^{2b}}{(1+e^{2b})^2}.
\end{align*}
It follows that $-H(\theta) \ge \frac{e^{2b}}{(1+e^{2b})^2} L$ and the theorem follows.

\noindent {\bf Case $k_j>2$ for some $j \in [m] $}: 
We first introduce a key auxiliary result used in the proof. 
\begin{claim}   \label{clm:ev_bound} Given $\theta \in   \IR^r,$  let  $ A =\mbox{diag}{(p)} - pp^T,$ where
$p$ is the column probability vector with \\ $p_i=e^{\theta_i}/(e^{\theta_1}+\cdots + e^{\theta_r})$ for each $i.$
If  $| \theta_i | \leq b,$ for $1\leq i \leq r,$  then $\lambda_2(A) \geq \frac{1}{re^{2b}}.$  Equivalently,
$e^{2b}A \geq B$ where $B= \frac{1}{r} \mbox{diag}(\mathbf{1}) - \frac{1}{r^2}\mathbf{1}\mathbf{1}^\top  .$
\end{claim}
 \begin{proof}  Fix $\theta$ satisfying the conditions of the lemma.  It is easy to see that
 for each $i,$  $p_i \geq \frac{1}{re^{2b}}.$    The matrix $A$ is
 positive semidefinite, and its smallest eigenvalue is zero, with the corresponding eigenvector $\mathbf{1}.$
 So $\lambda_2(A) = \min_{\alpha} \alpha^T A \alpha$ subject to the constraints
 $\alpha^T \mathbf{1} = 0$ and $\| \alpha \|^2=1.$   For $\alpha$ satisfying the constraints,
\begin{eqnarray*}
 \alpha^T A \alpha   & = &   \sum_i \alpha_i^2 p_i  - \left(\sum_j \alpha_ j p_j \right)^2 =   \sum_i    \left(  \alpha_i   -  \sum_j \alpha_j p_j   \right)^2  p_i  \\
 & = &  \min_c \sum_{i=1}^r (\alpha_i-c)^2 p_i  \geq  \min_c \sum_{i=1}^r (\alpha_i-c)^2  \frac{1}{re^{2b}}  \\
& = & \sum_{i=1}^r   \alpha_i^2    \frac{1}{re^{2b}}    =    \frac{1}{re^{2b}}
\end{eqnarray*}
The proof of the first part of the lemma is complete.   We remark that the bound of the lemma is nearly tight for
the case $\theta_1= \ldots = \theta_{r-1}=b$ and $\theta_r=-b,$
for which $\lambda_2(A) = \frac{ e^{2b}r}{((r-1)e^{2b}+1)^2} .$
The final equivalence mentioned in the lemma follows from the facts
 $\lambda_1(e^{2b}A)=\lambda_1(B)=0$ with common corresponding eigenvector $\mathbf{1},$  and
 $\lambda_i(e^{2b}A)\geq \frac{1}{r} = \lambda_i(B)$ for $2\leq i \leq r.$
\end{proof}
The Hessian matrix $H(\theta)$ depends on $\sigma_1^m$ and therefore is random given $S_1^m$. 
For a given user $j,$ and $\ell$ with $1\leq \ell \leq k_j-1,$  let $S^{(j,\ell)}$ denote the set of items
contending for the $\ell^{th}$ position in the ranking of user $j$ after higher ranking items have been selected:
$S^{(j,\ell)} =  \{ i :  \sigma_j^{-1}(i) \geq \ell \},$
let $\mathbf{1}^{(j,\ell)}$ denote the indicator vector for the set $S^{(j,\ell)},$
and let $p^{(j,\ell)}$ denote the corresponding probability column vector for the selection:
$$
p^{(j,\ell)}_i = P( \sigma_{j}(\ell) = i  | \sigma_j(1), \ldots  , \sigma_j(\ell-1) )
=  \frac{\mathbf{1}^{(j,\ell)}_i e^{\theta_i}}{\sum_{i' \in S_{j,\ell}}  e^{\theta_{i'}}}
 $$
The Hessian can be written as
$
H(\theta) = \sum_{j=1}^m  \sum_{\ell=1}^{k_j-1}  H^{(j,\ell)}
$
where
$$
- H^{(j,\ell)}  =\frac{1}{2}  \sum_{i, i' \in S^{(j,\ell)} }   (e_i-e_{i'} ) (e_i - e_{i'} )^\top   p^{(j,\ell)}_ip^{(j,\ell)}_{i'} =   \mbox{diag}(p^{(j,\ell)})-p^{(j,\ell)}(p^{(j,\ell)})^\top
$$
By \prettyref{clm:ev_bound} applied to the restriction of  $-H^{(j,\ell)}$ to $S^{(j,\ell)}\times S^{(j,\ell)},$
\begin{eqnarray}
-e^{2b}H^{(j,\ell)} & \geq  &
\frac{1}{k_j-\ell+1} \mbox{diag}(\mathbf{1}^{(j,\ell)})- \frac{1}{(k_j-\ell+1)^2}\mathbf{1}^{(j,\ell)}(\mathbf{1}^{(j,\ell)})^\top \nonumber  \\
&  =  & \frac{1}{2(k_j-\ell+1)^2}  \sum_{i, i' \in S^{(j,\ell)} }   (e_i-e_{i'} ) (e_i - e_{i'} )^\top  \label{eq.Hjl}
\end{eqnarray}
Summing over $j$ and $\ell$ in \eqref{eq.Hjl} and noting that $k_j-\ell+1 \leq k_j$ for all $j,\ell$ yields
\begin{align}
- e^{2b} H (\theta) \ge \frac{1}{2 }  \sum_{j=1}^m \sum_{i,i' \in S_j}   (e_i-e_{i'} ) (e_i - e_{i'} )^\top  \frac{1}{k_j^2}  \sum_{\ell=1}^{k_j-1} \1{\sigma_j^{-1}(i), \sigma_j^{-1}(i') \ge \ell}  :=\tilde{L} \label{eq:HessianLowerbound}
\end{align}
Observe that
\begin{align*}
\sum_{\ell=1}^{k_j-1} \mathbb{P}_{\theta} \left[ \sigma_j^{-1}(i), \sigma_j^{-1}(i') \ge \ell \right]  = 1+ \sum_{i'' \in S_j} \1{i'' \neq i,i'} \frac{e^{\theta_{i''}} }{e^{\theta_{i}} +e^{\theta_{i'}} +e^{\theta_{i''}}  } \ge 1+ \frac{k_j-2}{2e^{2b}+1} \ge \frac{k_j+1}{3e^{2b}}.
\end{align*}
Recall that $L$ is the Laplacian of $G$ and $L=\sum_{j=1}^m L_j$. It follows that
\begin{align}
 \mathbb{E}_{\theta}[ \tilde{L}] &=\frac{1}{2}   \sum_{j=1}^m  \sum_{i,i' \in S_j}   (e_i-e_{i'} ) (e_i - e_{i'} )^\top \frac{1}{k_j^2} \sum_{\ell=1}^{k_j-1} \mathbb{P}_{\theta}[ \sigma_j^{-1}(i), \sigma_j^{-1}(i') \ge \ell] \nonumber \\
 & \ge \frac{1}{2}   \sum_{j=1}^m  \sum_{i,i' \in S_j}   (e_i-e_{i'} ) (e_i - e_{i'} )^\top \frac{k_j+1}{3e^{2b} k_j^2} \nonumber \\
 &\ge \frac{1}{2}   \sum_{j=1}^m  \sum_{i,i' \in S_j}   (e_i-e_{i'} ) (e_i - e_{i'} )^\top \frac{1}{4e^{2b} (k_j-1) } =\frac{1}{4e^{2b} } L \label{eq:boundLaplacian}
\end{align}
Define $a_{ii'}=  \frac{1}{k_j^2} \sum_{\ell=1}^{k_j-1} \left( \1{\sigma_j^{-1}(i), \sigma_j^{-1}(i') \ge \ell } - \mathbb{P}_{\theta} [ \sigma_j^{-1}(i), \sigma_j^{-1}(i') \ge \ell] \right)$. Then
\begin{align*}
\tilde{L}- \mathbb{E}_{\theta}[ \tilde{L}] &= \frac{1}{2} \sum_{j=1}^m \left( \sum_{i,i' \in S_j}  a_{ii'} (e_i-e_{i'} ) (e_i - e_{i'} )^\top  \right) := \sum_{j=1}^m Y_j.
\end{align*}
Observe that $|a_{ii'}| \le \frac{1}{k_j}$ and therefore
$
-  \frac{(k_j-1)}{k_j} L_j \le Y_j \le \frac{(k_j-1)}{k_j}  L_j.
$
Furthermore, $\|L_j\|= \frac{k_j}{k_j-1}$ and thus $\|Y_j\| \le  1$. Moreover,
$
Y_j^2=  \sum_{i,i',i'' \in S_j}  a_{ii'} a_{ii''} (e_i-e_{i'} ) (e_i - e_{i''} )^\top .
$
It follows that for any vector $x \in \mathbb{R}^n$,
\begin{align*}
x^\top Y_j^2 x &=  \sum_{i,i',i'' \in S_j}  a_{ii'} a_{ii''} (x_i-x_{i'} ) (x_i - x_{i''}) \le \frac{1}{k_j^2}  \sum_{i,i',i'' \in S_j} |x_i-x_{i'}| |x_i - x_{i''} | \\
& = \frac{1}{k_j^2} \sum_{i \in S_j } \left( \sum_{i' \in S_j}  |x_i-x_{i'}| \right)^2 \le \frac{1}{k_j} \sum_{i,i' \in S_j} (x_i-x_{i'} )^2 = 2 x^\top L_j x,
\end{align*}
where the last inequality follows from the Cauchy-Swartz inequality. Therefore, $Y_j^2 \le 2 L_j$. It follows that $\sum_{j=1}^m \mathbb{E}_{\theta}[ Y_j^2  ] \le 2 L $ and thus $\|\sum_{j=1}^m \mathbb{E}_{\theta}[ Y_j^2  ] \| \le 2 \lambda_n$. By the matrix Bernstein inequality \cite{tropp2010matrixmtg}, with probability at least $1-n^{-1}$,
\begin{align*}
 \|\tilde{L}- \mathbb{E}_{\theta}[\tilde{L} ] \| \le 2 \sqrt{ \lambda_n \log n} + \frac{2}{3} \log n.
\end{align*}
By the assumption that $\lambda_n \ge C \log n$ for some sufficiently large constant $C$, $\|\tilde{L}- \mathbb{E}_\theta [\tilde{L}]  \|  \le 4 \sqrt{\lambda_n \log n}$. It follows from  \prettyref{eq:HessianLowerbound} and \prettyref{eq:boundLaplacian} that
\begin{align*}
 \lambda_2(-H(\theta)) \ge \frac{1}{e^{2b}} \lambda_2 (\tilde{L} ) & \ge \frac{1}{e^{2b}} \left( \frac{1}{4e^{2b} }  \lambda_2 - 4  \sqrt{\lambda_n \log n} \right).
\end{align*}

\end{proof}

\begin{proof}[Proof of \prettyref{thm:MLEUpperBound}]
Define $\Delta=\widehat{\theta}_{\ML}-\theta^\ast$. It follows from the definition that $\Delta$ is orthogonal to the all-one vector. By the definition of the ML estimator, $\calL(\widehat{\theta}_{\ML}) \ge \calL(\theta^\ast)$ and thus
\begin{align}
\calL(\hat{\theta}_{\ML})- \calL(\theta^\ast) - \langle \nabla \calL(\theta^\ast), \Delta \rangle \ge - \langle \nabla \calL(\theta^\ast), \Delta \rangle \ge - \|\nabla \calL(\theta^\ast)\|_2  \|\Delta\|_2, \label{eq:loglikelihoodgradient}
\end{align}
where the last inequality holds due to the Cauchy-Schwartz inequality. By the Taylor expansion, there exists a $\theta= a \widehat{\theta}_{\ML}+ (1-a) \theta^\ast $ for some $a \in [0,1]$ such that
\begin{align}
\calL(\hat{\theta}_{\ML})- \calL(\theta^\ast) - \langle \nabla \calL(\theta^\ast), \Delta \rangle = \frac{1}{2}    \Delta^\top H (\theta) \Delta
\le - \frac{1}{2} \lambda_2(-H(\theta)) \|\Delta\|_2^2, \label{eq:LoglikelihoodHessian}
\end{align}
where the last inequality holds because the Hessian matrix $-H(\theta)$ is positive semi-definite with $H(\theta) \mathbf{1} =\mathbf{0}$ and $\Delta^\top \mathbf{1}=0$.
Combining \prettyref{eq:loglikelihoodgradient} and \prettyref{eq:LoglikelihoodHessian},
\begin{align}
\|\Delta\|_2 \le  2 \|\nabla \calL(\theta^\ast)\|_2 /\lambda_2(- H(\theta)). \label{eq:MLEUpperBound}
\end{align}
Note that $\theta \in \Theta_b$ by definition. The theorem follows by \prettyref{lmm:Gradient} and \prettyref{lmm:Hessian}.

\subsection{Proof of \prettyref{cor:MLEUpperBound}}
Recall that $L= \sum_{j=1}^m L_j$.
Observe that
$
\mathbb{E}[L_j]= \frac{k_j}{n-1} \left( I- \frac{1}{n} \mathbf{1}\mathbf{1}^\top \right).
$
Define $Z_j=L_j- \mathbb{E}[L_j]$. Then $Z_1, \ldots, Z_m$ are independent symmetric random matrices with zero mean. Note that
\begin{align*}
\|Z_j\| \le  \|L_j\| + \|\mathbb{E}[L_j]\| \le  \frac{k_j}{k_j-1} + \frac{k_j}{n-1} \le 4.
 \end{align*}
Moreover,
\begin{align*}
\mathbb{E}[Z_j^2]= \frac{k_j^2}{(k_j-1)(n-1)} \left( I- \frac{1}{n} \mathbf{1}\mathbf{1}^\top \right)- \frac{k_j^2}{(n-1)^2} \left( I- \frac{1}{n} \mathbf{1}\mathbf{1}^\top \right).
\end{align*}
Therefore, $\| \sum_{j=1}^m \mathbb{E}[Z_j^2] \| \le  \frac{2mk}{n-1}$. By the matrix Bernstein inequality \cite{tropp2010matrixmtg}, with probability at least $1-n^{-1}$,
\begin{align*}
 \| L -  \mathbb{E}[L] \| \le 2\sqrt{\frac{ mk  \log n }{n-1}} + \frac{8}{3} \log n  \le 4 \sqrt{\frac{m k\log n }{n-1} } \le \frac{mk}{2(n-1)}.
\end{align*}
where the last two inequalities follow from the assumption that $mk \ge C \log n$ for some sufficiently large constant $C$.
Since $\mathbb{E}[L]=\frac{mk}{n-1} \left( I- \frac{1}{n} \mathbf{1}\mathbf{1}^\top \right)$, the smallest eigenvalue of $\mathbb{E}[L]$ is zero and all the other eigenvalues equal $\frac{mk}{n-1}$. It follows that
\begin{align*}
|\lambda_i - \frac{mk}{n-1}| \le  \| L -  \mathbb{E}[L] \| \le \frac{mk}{2(n-1)}, \; 2 \le i \le n,
\end{align*}
and thus $ \lambda_2 \ge \frac{mk}{2(n-1)}$ and $\lambda_n \le \frac{3mk}{2(n-1)}$.  By the assumption that $mk \ge Ce^{2b} \log n$ for some sufficiently large constant $C$,
$
\lambda_2 - 16 e^{2b} \sqrt{\lambda_n \log n} \ge \frac{mk}{4n}.
$
Then the corollary follow from  \prettyref{thm:MLEUpperBound}.
\end{proof}

\subsection{Proof of \prettyref{cor:IBUpperBound}}
Without loss of generality, assume $k_j$ is even for all $j \in [m]$. After the random \IB, there are $mk/2$ independent
pairwise comparisons and let $L$ denote the Laplacian of the comparison graph after the breaking. 
Recall that $L= \sum_{j=1}^m L_j$. With random \IB, we have
$
\mathbb{E}[L_j]= \frac{k_j}{n-1} \left( I- \frac{1}{n} \mathbf{1}\mathbf{1}^\top \right).
$
Define $Z_j=L_j- \mathbb{E}[L_j]$. Then $Z_1, \ldots, Z_m$ are independent symmetric random matrices with zero mean. Moreover,
\begin{align*}
\|Z_j\| \le  \|L_j\| + \|\mathbb{E}[L_j]\| \le  2 + \frac{k_j}{n-1} \le 4,
 \end{align*}
and
\begin{align*}
\mathbb{E}[Z_j^2]= \frac{2k_j}{n-1} \left( I- \frac{1}{n} \mathbf{1}\mathbf{1}^\top \right)- \frac{k_j^2}{(n-1)^2} \left( I- \frac{1}{n} \mathbf{1}\mathbf{1}^\top \right).
\end{align*}
Therefore, $\| \sum_{j=1}^m \mathbb{E}[Z_j^2] \| \le  \frac{2mk}{n-1}$.
Following the same argument for proving \prettyref{cor:MLEUpperBound}, we can show that $ \lambda_2(L_{\IB}) \ge \frac{mk}{2(n-1)}$ and the corollary follows by \prettyref{thm:MLEUpperBound} with $k=2$.

\subsection{Proof of \prettyref{thm:FBUpperBound}}
It follows from the definition of $\calL(\theta)$ given by \prettyref{eq:loglikelihoodbreaking} that
\begin{equation}  \label{eq.gradL_breaking}
\nabla_i \calL(\theta^*) = \sum_{j: i \in S_j} \frac{1}{k_j-1}   \sum_{i' \in S_j: i'\neq i} \left[ \1{\sigma_j^{-1}(i)< \sigma_j^{-1}(i') } - \frac{\exp(\theta^\ast_i)}{\exp(\theta^\ast_i)+\exp(\theta^\ast_{i'} ) } \right] := \sum_{j: i \in S_j} Y_j,
\end{equation}
which is a sum of $d_i$ independent random variables with mean zero and bounded by $1$. By Hoeffding's inequality, $|\nabla_i L(\theta^*)| \le \sqrt{d_i \log n} $ with probability at least $1-2n^{-2}$. By union bound, $\|\nabla L(\theta^\ast)\|_2 \le \sqrt{m k \log n} $ with probability at least $1-2n^{-1}$.
The Hessian matrix is given by
\begin{align*}
H(\theta)=- \sum_{j=1}^m  \frac{1}{2(k_j-1)}\sum_{i,i' \in S_j}  (e_i-e_{i'} ) (e_i - e_{i'} )^\top  \frac{\exp(\theta_i+ \theta_{i'} ) }{ \left[ \exp( \theta_{i}) +  \exp(\theta_{i'} ) \right]^2 }.
\end{align*}
If $|\theta_i | \le b, \forall i \in [n]$,
$
\frac{\exp(\theta_i+ \theta_{i'} ) }{ \left[ \exp( \theta_{i}) +  \exp(\theta_{i'} ) \right]^2 } \ge \frac{e^{2b}}{(1+e^{2b})^2}.
$
It follows that $-H(\theta) \ge \frac{e^{2b}}{(1+e^{2b})^2} L$ for $\theta \in \Theta_b$ and the theorem follows from \prettyref{eq:MLEUpperBound}.

\bibliographystyle{IEEEtran}
\bibliography{ranking}
\end{document}